\newcommand{\hlt}[1]{\textbf{{#1}}} 
\newtheorem{definition}{Definition}
\newtheorem{proposition}{Proposition}
\newtheorem{corollary}{Corollary}
\newtheorem{lemma}{Lemma}
\newtheorem{theorem}{Proposition}
\newcommand{\minor}[1]{\textcolor{black}{#1}}
\newlength\savewidth\newcommand\shline{\noalign{\global\savewidth\arrayrulewidth
  \global\arrayrulewidth 1pt}\hline\noalign{\global\arrayrulewidth\savewidth}}
\newcommand{\tablestyle}[2]{\setlength{\tabcolsep}{#1}\renewcommand{\arraystretch}{#2}\centering}
\renewcommand\paragraph{\@startsection{paragraph}{4}{\z@}
  {.5em \@plus1ex \@minus.2ex}{-.5em}{\normalfont\normalsize\bfseries}}\makeatother
\newcommand{\myPara}[1]{\vspace{.05in}\noindent\textbf{#1}\quad}
\def\vs{\emph{vs.}}
\def\ie{\emph{i.e.,~}}
\def\eg{\emph{e.g.,~}}
\def\etc{\emph{etc}}
\def\wrt{\emph{w.r.t.}~}
\def\etal{{\em et al.}}
\def\aka{{\em a.k.a.}~}
\def\chmk{\checkmark}
\newcommand{\figref}[1]{Fig.~\ref{#1}}%
\newcommand{\tabref}[1]{Tab.~\ref{#1}}%
\newcommand{\secref}[1]{Sec.~\ref{#1}}
\renewcommand{\eqref}[1]{Eq.~(\ref{#1})}
\newcommand{\propref}[1]{Appendix~(A.\ref{#1})}
\newcommand{\coloref}[1]{Appendix~(A\ref{#1})}
\begin{document}

\title{Localization Distillation for Object Detection}

\author{Zhaohui Zheng,
        Rongguang Ye,
        Qibin Hou,~\IEEEmembership{Member,~IEEE},
        Dongwei Ren,~\IEEEmembership{Member,~IEEE},
        Ping Wang,\\
        Wangmeng Zuo,~\IEEEmembership{Senior Member,~IEEE},
        and Ming-Ming Cheng,~\IEEEmembership{Senior Member,~IEEE},
\IEEEcompsocitemizethanks{
\IEEEcompsocthanksitem Z. Zheng, Q. Hou and M.M. Cheng are with TMCC, 
  CS, Nankai University, Tianjin, China. 
  E-mail: Zh\_zheng@mail.nankai.edu.cn; \{houqb,cmm\}@nankai.edu.cn
\IEEEcompsocthanksitem R. Ye and P. Wang are with the School of Mathematics, 
  Tianjin University, China. E-mail: \{ementon,wang\_ping\}@tju.edu.cn
\IEEEcompsocthanksitem D. Ren and W. Zuo are with the School of 
  Computer Science and Technology, Harbin Institute of Technology, China. 
  E-mail: \{rendongweihit,cswmzuo\}@gmail.com
\IEEEcompsocthanksitem Q. Hou is the corresponding author.
}

\thanks{Manuscript received Apr. 12, 2022; revised Dec. 4, 2022.}}
%
%

\markboth{Journal of \LaTeX\ Class Files,~Vol.~14, No.~8, August~2015}%
{Shell \MakeLowercase{\textit{et al.}}: Bare Demo of IEEEtran.cls for Computer Society Journals}

\IEEEtitleabstractindextext{%
\begin{abstract}
%
Previous knowledge distillation (KD) methods for object detection mostly 
focus on feature imitation instead of mimicking the prediction logits
due to its inefficiency in distilling the localization information.
In this paper, 
we investigate whether logit mimicking always lags behind feature imitation.
Towards this goal, we first present a novel localization distillation (LD) 
method which can efficiently transfer the localization knowledge 
from the teacher to the student.
Second, we introduce the concept of valuable localization region
that can aid to selectively distill the classification and 
localization knowledge for a certain region.
Combining these two new components, for the first time, 
we show that logit mimicking can outperform feature imitation 
and the absence of localization distillation is a critical reason for 
why logit mimicking under-performs for years.
The thorough studies exhibit the great potential of logit mimicking 
that can significantly alleviate the localization ambiguity, 
learn robust feature representation, 
and ease the training difficulty in the early stage.
We also provide the theoretical connection between the proposed LD and 
the classification KD, 
that they share the equivalent optimization effect.
Our distillation scheme is simple as well as effective and can 
be easily applied to both dense horizontal object detectors and 
rotated object detectors.
Extensive experiments on the MS COCO, PASCAL VOC, and DOTA benchmarks 
demonstrate that our method can achieve considerable AP improvement without 
any sacrifice on the inference speed.
Our source code and pretrained models are publicly available at 
\url{https://github.com/HikariTJU/LD}.
\end{abstract}

\begin{IEEEkeywords}
Object detection, localization distillation, knowledge distillation, rotated object detection.
\end{IEEEkeywords}}

\maketitle

\IEEEdisplaynontitleabstractindextext

%
\IEEEpeerreviewmaketitle

\IEEEraisesectionheading{\section{Introduction}\label{sec:introduction}}

\IEEEPARstart{A}{s} a model compression technology, 
knowledge distillation (KD) \cite{hinton2015distilling,FitNets} 
has been an efficient technique in 
learning compact models to mitigate the computational burden.
It has been widely validated to be useful for boosting the performance 
of small-sized student networks 
by transferring the generalized knowledge captured by 
large-sized teacher networks \cite{hinton2015distilling,FitNets,
zagoruyko2016paying,kim2018paraphrasing,Jin_2019_ICCV,wang2021distilling}.
Speaking of KD in object detection, 
there are mainly three popular KD pipelines as shown in \figref{fig:previous}.
Logit mimicking \cite{hinton2015distilling}, also known as classification KD, 
is originally designed for image classification, 
where the KD process operates on the logits of the teacher-student pair.
Feature imitation, motivated by the pioneer work FitNet~\cite{FitNets}, 
aims to enforce the consistency of the feature representations 
between the teacher-student pair.
The last one, namely the pseudo bounding box regression, 
uses the predicted bounding boxes from the teacher as an addition supervision
to the bounding box prediction branch of the student.

\begin{figure}[!t]
	\centering
	\includegraphics[width=\linewidth]{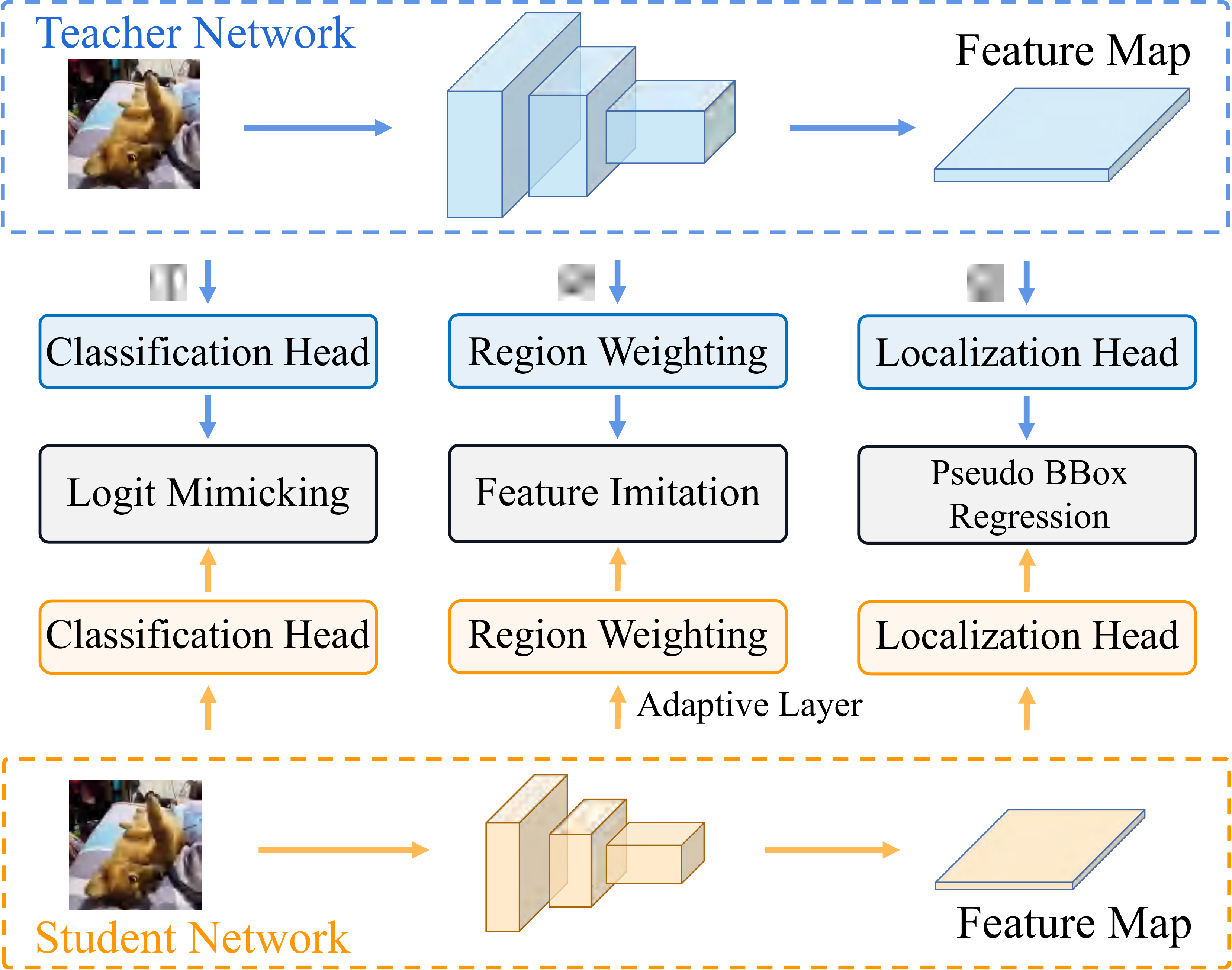} \\ 
	\caption{Existing KD pipelines for object detection. 
    \ding{172} Logit Mimicking: 
    classification KD in \cite{hinton2015distilling}. 
    \ding{173} Feature Imitation: recent popular methods distill intermediate 
    features based on various distillation regions, 
    which usually need adaptive layers to align 
    the size of the student's feature map. 
    \ding{174} Pseudo BBox Regression: treating teachers' 
    predicted bounding boxes as additional regression targets
    \cite{chen2017learning,sun2020distilling}.
	}\label{fig:previous}
\end{figure}

Among these methods, 
the original logit mimicking technique~\cite{hinton2015distilling}
for classification is often inefficient 
as it only transfers the classification knowledge 
while neglects the importance of localization knowledge distillation.
Therefore, existent KD methods for object detection mostly focus on 
feature imitation, 
and demonstrate that distilling the feature representations is 
more advantageous than distilling the logits
\cite{wang2019distilling,zhang2020improve,kang2021instanceconditional}.
We summarize three crucial reasons for this phenomenon:
First of all, the effectiveness of logit mimicking partially relies 
on the number of classes which may vary in different application scenarios
\cite{wang2019distilling}.
Second, the logit mimicking can only be applied to the classification head, 
which cannot distill the localization information.
At last, in the framework of multi-task learning, feature imitation can 
transfer the hybrid knowledge of classification and localization 
which can benefit the downstream classification and localization tasks.

\begin{figure}[!t]
	\centering
	\includegraphics[height=0.433\linewidth]{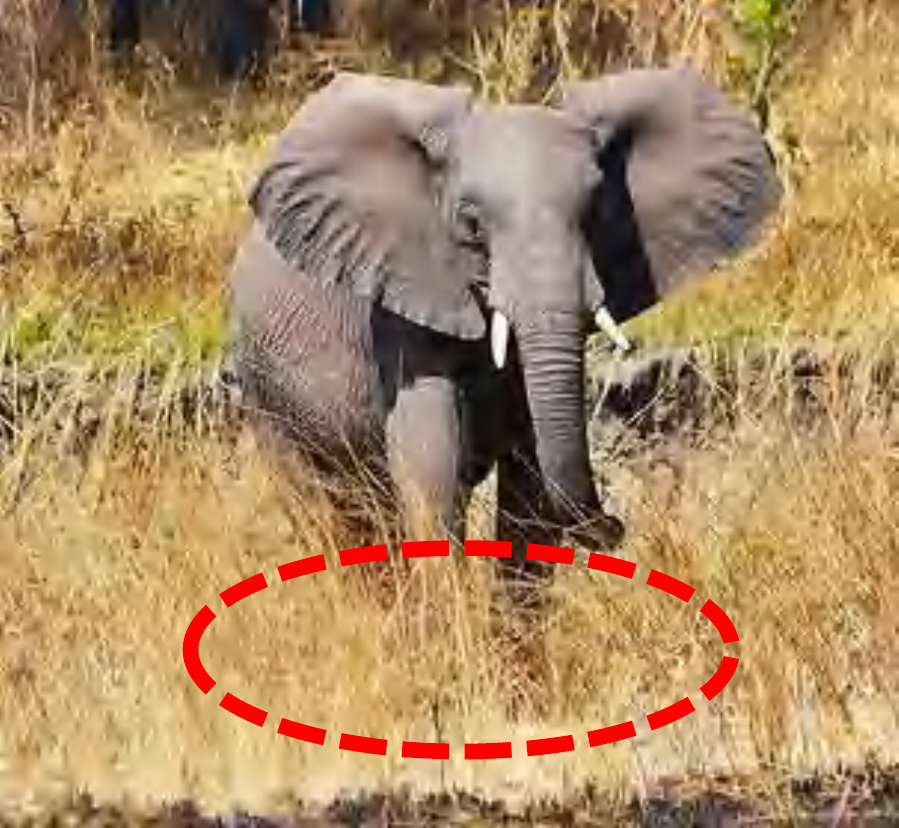} \hfill
	\includegraphics[height=0.433\linewidth]{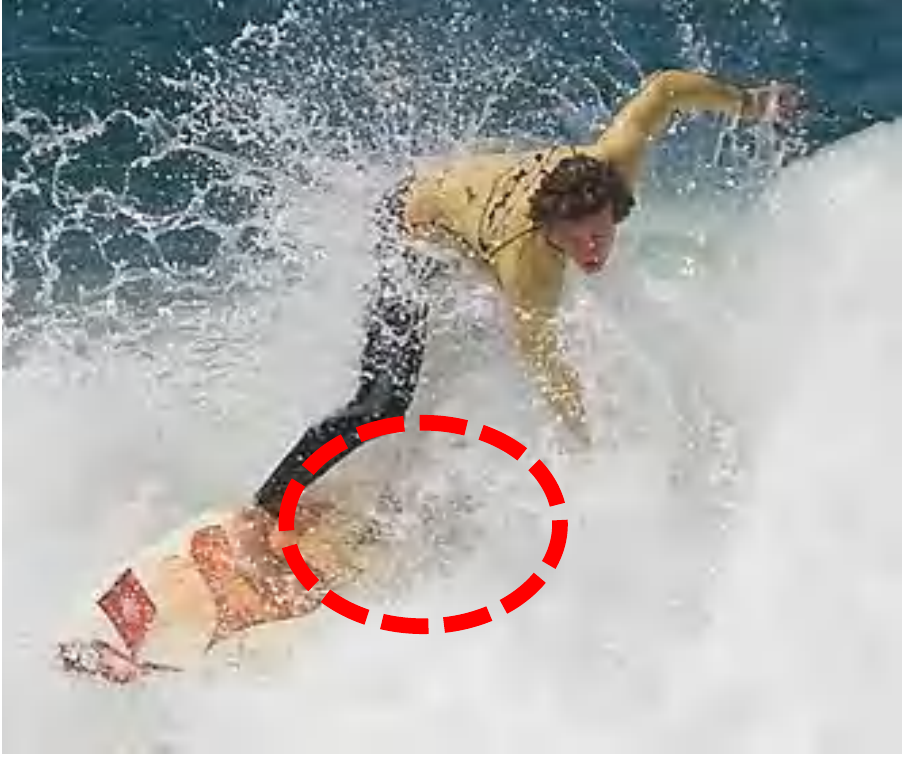}\\
	\caption{Bottom edge for ``\emph{elephant}" and right edge for 
    ``\emph{surfboard}" are ambiguous to locate.
	}\label{fig:ambiguity}
\end{figure}

In this work, 
we examine the aforementioned common belief in object detection KD, 
and challenge whether feature imitation always stays ahead of logit mimicking?
For this purpose, 
we firstly present a simple yet effective localization distillation (LD) method 
which is inspired by an interesting observation 
that the bounding box distributions generated by the teacher
\cite{gfocal,offsetbin} 
can serve as a strong supervision to the student detector.
The bounding box distribution \cite{gfocal,offsetbin} 
is originally designed to model the real distributions of bounding boxes, 
an efficient way to solve the localization ambiguity as shown in 
\figref{fig:ambiguity}.
With the discretized probability distribution representations, 
the localizer can reflect the localization ambiguity by the flatness and 
sharpness of the distribution, 
which is not held in the conventional Dirac delta representation 
of bounding boxes~\cite{yolov1,SSD,fasterrcnn,FCOS}.
This allows our LD to efficiently transfer richer localization knowledge 
from the teacher to the student than using pseudo bounding box regression 
(right part in \figref{fig:previous}).

Combining the proposed LD and the classification KD yields a unified KD method
based on a pure logit mimicking framework for both the classification branch
and the localization branch.
As logit mimicking enables us to separately distill the classification 
knowledge and the localization one, 
we found that these two sub-tasks favor different distillation regions.
Motivated by this, 
we introduce the concept of valuable localization region (VLR) and propose 
to conduct distillation in a \minor{selective region distillation} manner.
We will show the advantage of using VLR in our distillation framework in
the experiment section.

Furthermore, 
we comprehensively discuss the technical details of LD and elaborate
on the behavior of logit mimicking and feature imitation.
Intriguingly, we observe that logit mimicking can outperform feature imitation 
for the first time, 
which indicates that the absence of localization distillation is 
actually the key reason 
why logit mimicking under-performs in object detection for years.
Another observation is that we find the reason 
why logit mimicking works is not because of 
the consistency of the feature representations between the teacher-student pair.
Conversely, the student learns significantly different feature representations 
from the teacher's in terms of the $l_n$ distance and linear correlation.
We also observe that if the student is trained with feature imitation, 
it tends to produce a sharp AP score landscape in the feature subspace, 
and aggravates the training difficulty in the early training stage.

The above observations reflect the great potentials of logit mimicking 
over feature imitation: 
1) being able to separately transfer different types of knowledge, 
2) learning more robust feature representations, and 
3) easing the training difficulty.
Our method is simple and can be easily equipped with in both horizontal 
and rotated object detectors to improve their performance 
without introducing any inference overhead.
Extensive experiments on MS COCO show that without bells and whistles, 
we can lift the AP score of the strong baseline GFocal~\cite{gfocal} 
with ResNet-50-FPN backbone from 40.1 to 42.1, and AP$_{75}$ from 43.1 to 45.6.
Our best model using ResNeXt-101-32x4d-DCN backbone can achieve 
a single-scale test of 50.5 AP, 
which surpasses all existing detectors under the same backbone, neck, 
and test settings.
PyTorch \cite{PyTorch} and Jittor \cite{hu2020jittor} version of 
the source code and pretrained models are publicly available at 
\url{https://github.com/HikariTJU/LD}.

The main contributions of this paper are four-fold:
\begin{enumerate}
  \item We present a novel localization distillation method that 
    greatly improve the distillation efficiency of logit mimicking 
    in object detection.
  \item We provide exploratory experiments and analysis for the 
    behavior of logit mimicking and feature imitation. 
    To our best knowledge, this is the first work revealing the 
    great potential of logit mimicking over feature imitation.
  \item We present a \minor{selective region distillation} based on 
    the newly introduced valuable localization region 
    to better distill the student detector.
  \item We extend our LD to a rotated version so that it can be applied 
    to arbitrary-oriented object detection.
\end{enumerate}

This paper is a substantial extension of its previous conference version
\cite{zheng2022LD}.
In particular, (a) We provide theoretical connection for the proposed LD 
and the classification KD that they share the equivalent optimization effects;
%
(b) We conduct more detailed and insightful analysis for logit mimicking 
and feature imitation, 
including the different characteristics of the learned feature representations 
and logits, and the training difficulty of feature imitation;
(c) We extend the original LD to a more generic version, namely rotated LD, 
which can distill arbitrary-oriented object detectors.

\section{Related Work}\label{sec:related}
	

\subsection{Knowledge Distillation}

Knowledge distillation \cite{hinton2015distilling,
Zagoruyko2017AT,bae2020densely,relationKD,TA,DenselyTA},
as a hot research topic, has been deeply studied recently.
The fundamental idea is to use a well-performed large-sized teacher network 
to transfer the captured knowledge to the small-sized student network.
Logit mimicking, \aka classification KD, was first introduced by 
Hinton \etal \cite{hinton2015distilling}, 
where the logit outputs of the student classifier are supervised by 
those of the teacher classifier.
Later, FitNet \cite{FitNets} extends the teacher-student learning framework by 
mimicking the intermediate-level hints from the hidden layers 
of the teacher model.
Knowledge distillation was first applied to object detection in 
\cite{chen2017learning}, where the hint learning, classification KD, 
and pseudo bounding box regression were simultaneously used for 
multi-class object detection.
However, an object detector requires not only precise classification ability, 
but also strong localization ability.
The absence of localization knowledge distillation limits the performance of 
the conventional KD method.

To tackle the above issue, many feature imitation methods have been developed, 
most of which focus on where to distill and loss function weighting.
Among these, Li \etal \cite{Li_2017_CVPR} proposed to mimic the features
within the region proposal for Faster R-CNN.
Wang \etal \cite{wang2019distilling} imitated the fine-grained features 
on close anchor box locations.
Recently, Dai \etal \cite{GIbox} introduced the 
General Instance Selection Module to mimic deep features 
within the discriminative patches between teacher-student pairs.
DeFeat \cite{defeat} leverages different loss weights when 
conducting feature imitation on the object regions and the background region.
%
%
%
There are also various feature imitation methods from 
the perspective of weighted imitation loss, 
including Gaussian mask weighted~\cite{sun2020distilling}, 
feature richness weighted~\cite{FeatureRichness}, 
and prediction-guided imitation loss~\cite{li2021knowledge}.
Unlike the aforementioned methods, 
our work introduces localization distillation and demonstrate that 
logit mimicking can outperform feature imitation for KD in object detection.

\subsection{Object Localization}
Object localization is a fundamental issue in object detection 
\cite{locnet,wang2019region,SABL,zhu2019feature,gridrcnn,kong2018deep,SCRDet,GWD,KLD,VFNet}.
Bounding box regression is the most popular way so far for localization 
in object detection \cite{felzenszwalb2009object,yolov1,SSD,fasterrcnn,Han2019221}, 
where the Dirac delta distribution representation has been used for years.
R-CNN series \cite{fasterrcnn,cascadercnn,librarcnn,DynamicRCNN} 
adopt multiple regression stages to refine the detection results, 
while YOLO series\cite{yolov1,yolov2,yolov3,yolov4}, 
SSD series \cite{SSD,DSSD,STDN}, 
and FCOS series \cite{FCOS,gfocal} adopt one-stage regression.
In \cite{unitbox,giou,diou,ciou}, IoU-based loss functions are proposed 
to improve the localization quality of bounding boxes.
Recently, bounding box representation has evolved from 
Dirac delta distribution \cite{yolov1,SSD,fasterrcnn} to 
Gaussian distribution \cite{softernms,gaussian_yolov3}, 
and further to probability distribution \cite{offsetbin,gfocal}.
The probability distribution of bounding boxes is more comprehensive 
for describing the uncertainty of bounding boxes, and has been validated to 
be the most advanced bounding box representation so far.

\subsection{Localization Quality Estimation}

As the name suggests, Localization Quality Estimation (LQE) predicts a score 
that measures the localization quality of the bounding boxes 
predicted by the detector.
LQE is usually used to cooperate with the classification task 
during training \cite{gfocalv2}, 
\ie enhancing the consistency between classification and localization.
It can also be applied in joint decision-making during post-processing 
\cite{yolov1,iounet,FCOS}, 
which considers both the classification score and LQE when performing NMS.
Early research can be dated to YOLOv1 \cite{yolov1}, 
where the predicted object confidence is used to penalize 
the classification score.
Then, box/mask IoU \cite{iounet,mask_scoring} and 
box/polar centerness \cite{FCOS,polarmask} are proposed 
to model the uncertainty of detections in object detection and 
instance segmentation, respectively.
For bounding box representation, 
Softer-NMS \cite{softernms} and Gaussian YOLOv3 \cite{gaussian_yolov3} 
predict variances for each edge of the bounding boxes.
LQE is a preliminary approach to model localization ambiguity.

\subsection{Arbitrary-Oriented Object Detection}

Driven by the success of object detection, 
rotated object detection has become a hot topic in computer vision recently 
\cite{zhou2022mmrotate}.
%
The mainstream rotated object detectors, such as RRPN~\cite{RRPN}, 
generate rotated proposals based on Faster R-CNN \cite{fasterrcnn}, 
while Rotated-RetinaNet~\cite{lin2017focal} directly predicts an additional 
rotated angle based on RetinaNet.
To address the boundary discontinuity and square-like problems, 
SCRDet \cite{SCRDet} and RSDet \cite{RSDet} propose IoU-smooth L1 loss 
and modulated loss respectively for attaining smoother boundary loss, 
and CSL~\cite{CSL} proposes to use angle classification 
instead of angle regression.

Different from the horizontal bounding box regression 
which can easily leverage the IoU-based losses 
(\eg GIoU \cite{giou}, DIoU \cite{diou}, and CIoU \cite{ciou}) 
to enhance localization ability, 
the Skew IoU loss for rotated bounding box regression is quite difficult 
to implement due to the complexity of the backward propagation 
in existing deep learning libraries~\cite{tensorflow,PyTorch,hu2020jittor}.
PIoU loss \cite{piou} approximates the Skew IoU by accumulating the pixels 
of the intersection and union of two rotated bounding boxes.
GWD \cite{GWD} and KLD~\cite{KLD} model the rotated bounding boxes 
via the 2D Gaussian Distribution representation and 
propose to use the Gaussian Wasserstein distance and KL divergence 
to simulate the Skew IoU loss, respectively.
More recently, based on the 2D Gaussian distribution representation 
of rotated bounding boxes, 
Yang \etal~\cite{kfiou} proposed the KFIoU loss by exploiting the 
Kalman filter formulation to mimic the Skew IoU in the trend level.
To sum up, the rotated regression-based detectors are still dominating 
this task owing to their simplicity and strong performance.

\begin{figure*}[!t]
  \centering
  \begin{overpic}[width=\textwidth]{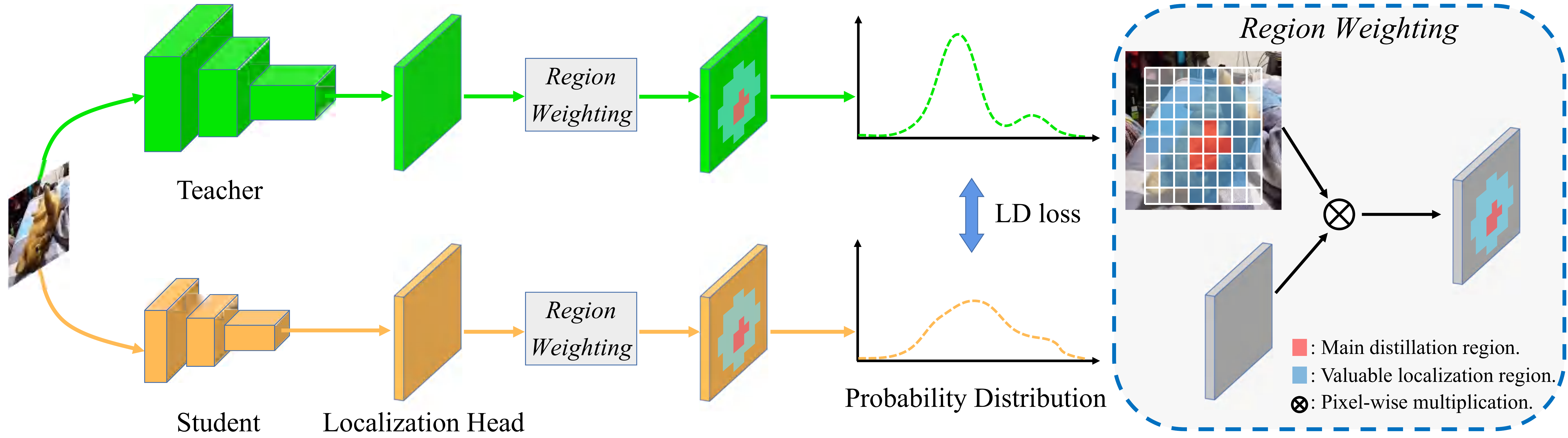}
    \put(49.3,8){$\mathcal{S}(\cdot,\tau)$}
    \put(49.3,23){$\mathcal{S}(\cdot,\tau)$}
  \end{overpic}\\ \vspace{-5pt}
  \caption{Illustration of localization distillation (LD) 
    for an edge $e \in \mathcal{B}$. 
    Only the localization branch is visualized here. 
    $\mathcal{S}(\cdot,\tau)$ is the generalized SoftMax function 
    with temperature $\tau$. 
    For a given detector, we first switch the bounding box representation 
    to probability distribution. 
    Then, we determine where to distill via region weighting 
    on the main distillation region and the valuable localization region. 
    Finally, we calculate the LD loss between two probability distributions 
    predicted by the teacher and the student.
  }\label{fig:LD}
\end{figure*}

\section{Approach} \label{approach}

To begin with, we revisit the knowledge distillation background, 
including logit mimicking and feature imitation.
Next, we describe our simple yet effective localization distillation (LD)
and explain how to apply LD for rotated object detection.
Then, we analyze the property of the proposed LD loss, 
especially the theoretical connection to the classification KD.
In addition, we also introduces the concept of valuable localization region 
for better distilling the localization knowledge in our framework.
Finally, we describe the \minor{selective region distillation} 
based on the newly introduced valuable localization region 
and give the optimization objective.

\subsection{Preliminaries}\label{sec:background}

\minor{In the KD pipeline of object detection, 
the input image is fed into two object detectors, 
\ie the student detector and the frozen teacher detector.
The distillation process forces the outputs of the student to mimic 
those of the teacher.
There are two mainstream paradigms of KD methods in object detection.}

\newcommand{\LossT}[1]{\mathcal{L}_\text{#1}}
\newcommand{\mcH}[2]{\mathcal{H}(#1,#2)}
\newcommand{\mcHbm}[2]{\mathcal{H}(\bm{#1},\bm{#2})}
\newcommand{\mcHt}[2]{\mathcal{H}(\bm{#1}_\tau,\bm{#2}_\tau)}
\newcommand{\mcS}[2]{\mathcal{S}(\bm{z}_{#1}, #2)}

\myPara{Logit mimicking.}
\minor{The logit mimicking (LM) is first developed for image classification 
\cite{hinton2015distilling}, 
in which the student model can be improved by mimicking the soft output 
of the teacher classifier.
Let $\bm{z}_S,\bm{z}_T\in\mathbb{R}^{W\times H\times C}$
be the logits predicted by the student and the teacher, respectively.
$W$ and $H$ represent the output size of the logit maps.
$C$ denotes the number of classes.}
These logits are then transformed into probability distributions 
$\bm{p}_\tau$ and $\bm{q}_\tau$ by using the generalized SoftMax function.
%
We can train the network by minimizing the loss:
\begin{align}\label{eq:kd}
  \mathcal{L} 
  &= \LossT{CE} + \lambda \LossT{KD} \\
  &= \mcHbm{p}{g} + \lambda \mcHt{p}{q},
\end{align}
where $\bm{p}$ is the predicted probability vectors, 
$\bm{g}=\{0,1\}^n$ is the one-hot ground-truth label, 
$\mathcal{H}$ is the cross-entropy loss, 
and $\lambda$ balances the two loss terms.
\minor{For object detection, the distillation can be carried out on 
some pre-defined distillation region $\mathcal{R}$.}

\myPara{Feature imitation.} 
Recently, it has been found that feature imitation (FI), 
which aims to transfer knowledge by imitating the deep features 
between teacher-student pairs, 
works better than the classification KD \cite{FitNets,wang2019distilling}.
Mathematically, the feature imitation procedure can be formulated as:
\begin{equation}\label{eq:im}
  \LossT{FI} = \frac{1}{|\mathcal{R}|}\sum_{r \in \mathcal{R}}
  ||\tilde{M}_S(r)-M_T(r)||_2,
\end{equation}
where $\mathcal{R}$ is the imitation region, 
and $|\cdot|$ is the cardinality of the region.
Note that an adaptive layer is needed to transform the size of student's 
feature map $M_S$ to be the same as the teacher's $M_T$,
so that $\tilde{M}_S,M_T\in\mathbb{R}^{W\times H\times D}$.

\myPara{Bounding box representation.}
\minor{For a given bounding box $\mathcal{B}$, 
the conventional representations have two forms, \ie 
$\{\delta_x,\delta_y,\delta_w,\delta_h\}$ 
(encoding the coordinate mappings of the central point, 
the width and the height from the anchor box to the ground-truth box)
\cite{yolov1,SSD,fasterrcnn} 
and $\{t,b,l,r\}$ 
(the distances from the sampled point to the top, bottom, left, and right edges)
\cite{FCOS}.
These two forms actually follow the Dirac delta distribution that 
only focuses on the ground-truth locations but cannot model 
the ambiguity of bounding boxes as shown in \figref{fig:ambiguity}.
This is also clearly demonstrated in some previous works
\cite{softernms,gfocal}.}

\subsection{Localization Distillation} \label{sec:LD}

\minor{In this subsection, we present localization distillation (LD), 
a new way to enhance the distillation efficiency for object detection.
Our LD is evolved from the view of probability distribution representation 
of bounding boxes 
(anchor free \cite{gfocal} and anchor-based \cite{offsetbin}), 
which is originally designed for generic object detection and 
carries abundant localization information.
The working principle of our LD can be seen in \figref{fig:LD}.
The procedure is the same to both anchor-based and anchor-free detectors.}

\minor{Given an object detector, we follow \cite{gfocal,offsetbin} to convert the bounding box representation 
from a quaternary representation to a probability distribution.
Let $e\in\mathcal{B}$ 
be one of the regression variables of bounding box, 
whose regression range is $[e_{\min}, e_{\max}]$.
The bounding box distribution quantizes the continuous regression range 
into a uniform discretized variable 
$\bm{e}=[e_0,e_1,\cdots,e_n] \in \mathbb{R}^{n+1}$
with $n$ sub-intervals, where $e_0=e_{\min}$ and $e_n=e_{\max}$.
The localization head predicts $n+1$ logits 
$\bm{z}=\{z_0,z_1,\cdots,z_n\}$, 
corresponding to the endpoints of the subintervals $\{e_0,e_1,\cdots,e_n\}$.
Each edge of the given bounding box can be represented as a probability 
distribution by using the SoftMax function. 
For the number of the subinterval $n$, 
we follow the settings of GFocal \cite{gfocal}, 
and a recommended choice of $n$ is $8\sim16$.}
Different from \cite{offsetbin,gfocal}, 
we transform $\bm{z}_S$ and $\bm{z}_T$ into the probability distributions 
$\bm{p}_\tau$ and $\bm{q}_\tau$ using the generalized 
SoftMax function $\mathcal{S}(\cdot,\tau)$.
Note that when $\tau=1$, it is equivalent to the original SoftMax function.
When $\tau\rightarrow0$, it tends to be a Dirac delta distribution.
When $\tau\rightarrow\infty$, it will be a uniform distribution.
Empirically, $\tau > 1$ is set to soften the distribution, 
making the bounding box distribution carry more information.
The localization distillation for measuring the similarity 
between the two probability vectors $\bm{p}_\tau, \bm{q}_\tau\in\mathbb{R}^n$
for one of the bounding box representation $\bm{e}$ is attained by:
\begin{align}
\LossT{LD}^{\bm e} 
& = \mathcal{H}(\bm{p}_\tau,\bm{q}_\tau) \\
& = \mathcal{H}(\mcS{S}{\tau}, \mcS{T}{\tau}).
\end{align}
Then, LD for all the four edges of some bounding box $\mathcal{B}$ 
can be formulated as:
\begin{equation}\label{eq:ld}
\LossT{LD} (\mathcal{B}_S, \mathcal{B}_T)
= \sum_{\bm{e} \in \mathcal{B}} \LossT{LD}^{\bm{e}},
\end{equation}

where $\mathcal{B}_S, \mathcal{B}_T$ 
are the predicted bounding boxes of the student and the teacher, respectively.

\subsection{Rotated LD}\label{sec:rotatedLD}

Our LD can also be flexibly used to distill rotated bounding box detectors.
Parametric regression is the most popular manner in the classical dense 
regression-based rotated object detection \cite{SCRDet,r3det,GWD,KLD}.
\minor{$\mathcal{B}=\{\delta_x,\delta_y,\delta_w,\delta_h,\delta_\theta\}$} 
is commonly used to represent a rotated bounding box, 
where $\delta_\theta$ denotes the encoded rotated angle.
%
%
To conduct rotated localization distillation, we firstly 
generate the lower and upper bounds of the regression range 
$[e_{\min},e_{\max}]$, 
where $e\in\mathcal{B}$.

Note that the rotated angle prediction $\delta_\theta$ usually has 
a different regression range from $\delta_x,\delta_y,\delta_w,\delta_h$.
Thus, different lower and upper bounds of regression ranges are set for them.
\minor{In practice, $[e_{min},e_{max}]\subset[-5,5]$ will be an acceptable choice.}
%
%
Then, we convert the rotated bounding box to rotated bounding box distributions, as \secref{sec:LD} describes.
Finally, the LD loss is calculated according to \eqref{eq:ld}
for the rotated bounding box distributions.

\subsection{Property of LD}\label{sec:property}
We can see that our LD holds the formulation of the standard logit mimicking.
The question one may ask is: Does LD also inherit the property of the classification KD, especially for the optimization process?
Different from the classification task where a unique integer
is treated as the ground-truth label, 
the ground-truth label of the localization task is a float point number $e^*$, 
whose value, for instance, is ranged in an interval $[e_i,e_{i+1}]$.
In the following, we show an important property of LD, demonstrating
that it can inherit the optimization effects held by the 
classification KD.

\begin{proposition}\label{pro1}
Let $\bm{s}$ be the student's predicted probability vector, 
and $u_1, u_2$ are two constants with $u_1 + u_2 = 1$. 
Then, we have:
\begin{enumerate}
  \item If $\bm{p},\bm{q}$ are two classification probability vectors, 
    LD effect on the linear combination $\bm{l}=u_1\bm{p}+u_2\bm{q}$ 
    is equal to the linear combination of KD effects on $\bm{p},\bm{q}$;
  \item If $\bm{l}$ is a localization probability vector, 
    LD effect on $\bm{l}$ is equal to two KD effects on its decomposition 
    $\bm{p}$ and $\bm{q}$.
\end{enumerate}
The above two share the same expression,
\begin{equation}\label{eq:KDLD}
  \partial LD^{\bm{l}}_i=u_1\partial KD^{\bm{p}}_i+u_2\partial KD^{\bm{q}}_i,
\end{equation}
where $\partial KD^{\bm{p}}_i$ denotes the derivatives of the KD loss of 
two probabilities $\bm{s},\bm{p}$ \wrt a given logit $z_i$, 
and $\partial LD^{\bm{p}}_i$ likewise for the LD loss.
\end{proposition}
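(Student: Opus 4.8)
The plan is to recognize that the KD loss and the LD loss are the \emph{same} functional evaluated at different targets: both are the cross-entropy $\mathcal{H}(\bm{s},\cdot)$ between the student's temperature-softened SoftMax output $\bm{s}$ and a normalized probability vector, so the whole statement collapses to a single differentiation identity. First I would compute, for the generalized SoftMax $s_j = e^{z_j/\tau}/\sum_k e^{z_k/\tau}$ and any target $\bm{t}$ with $\sum_j t_j = 1$, the derivative of $\mathcal{H}(\bm{s},\bm{t}) = -\sum_j t_j \log s_j$ with respect to a logit $z_i$. Using the standard SoftMax Jacobian $\partial \log s_j / \partial z_i = \tfrac{1}{\tau}(\delta_{ij} - s_i)$ and collapsing the sum via $\sum_j t_j = 1$ gives $\partial \mathcal{H}/\partial z_i = \tfrac{1}{\tau}(s_i - t_i)$. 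The crucial structural observation is that this gradient is \emph{affine} in the target, linear in $t_i$ with a target-independent offset $s_i$; both $\partial KD^{\bm{p}}_i$ and $\partial LD^{\bm{l}}_i$ are instances of this formula with $\bm{t}=\bm{p}$ and $\bm{t}=\bm{l}$, sharing the \emph{same} $\bm{s}$.

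For claim (1) I would substitute $\bm{l}=u_1\bm{p}+u_2\bm{q}$ into $\partial LD^{\bm{l}}_i = \tfrac{1}{\tau}(s_i - l_i)$ and split the offset using the hypothesis $u_1+u_2=1$, writing $s_i = (u_1+u_2)s_i$. Regrouping yields $u_1\tfrac{1}{\tau}(s_i-p_i) + u_2\tfrac{1}{\tau}(s_i-q_i) = u_1\partial KD^{\bm{p}}_i + u_2\partial KD^{\bm{q}}_i$, which is exactly \eqref{eq:KDLD}. I would stress that the constraint $u_1+u_2=1$ is doing all the work: it is precisely what lets the target-independent $s_i$ term be partitioned between the two KD gradients, and for a general (non-normalized) affine combination the identity would break.

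For claim (2) the only extra ingredient is exhibiting the decomposition. Since the localization ground truth is a real number $e^*\in[e_i,e_{i+1}]$, following the distribution encoding of \cite{gfocal} it is represented by the probability vector supported on the two bracketing endpoints, with interpolation weights $u_1=(e_{i+1}-e^*)/(e_{i+1}-e_i)$ and $u_2=(e^*-e_i)/(e_{i+1}-e_i)$ placed on the one-hot (Dirac) vectors $\bm{p},\bm{q}$ at $e_i,e_{i+1}$. These weights satisfy $u_1+u_2=1$ by construction, so $\bm{l}=u_1\bm{p}+u_2\bm{q}$ is a legal instance of the combination in claim (1), and \eqref{eq:KDLD} follows by the identical substitution; the two ``KD effects'' are then simply the cross-entropy gradients against the two endpoint bins. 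The same argument applies verbatim to any two-component decomposition whose coefficients sum to one.

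The computation itself is elementary, so I do not expect a genuine obstacle; the two points that require care are getting the temperature factor $\tfrac{1}{\tau}$ right in the SoftMax--cross-entropy gradient and verifying that the Part-(2) decomposition is properly normalized, since $u_1+u_2=1$ is the single hypothesis the whole argument rests on. The conceptual takeaway I would highlight is that this equivalence is not special to localization at all: it is a direct consequence of the affine structure of the SoftMax--cross-entropy gradient in its target, which is exactly why LD inherits the optimization behaviour of classification KD.
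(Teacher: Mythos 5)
Your computation for claim (1) is correct and is essentially the paper's own argument: the paper's appendix proof likewise reduces everything to the affinity of the softmax--cross-entropy gradient in its target, writing $\partial LD^{\bm{l}}_i={s_\tau}_i-{l_\tau}_i=u_1({s_\tau}_i-{p_\tau}_i)+u_2({s_\tau}_i-{q_\tau}_i)$, which is exactly your ``split $s_i=(u_1+u_2)s_i$ and regroup'' step (the paper suppresses the $1/\tau$ factor that you correctly keep; this is harmless since it rescales both sides identically).

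Your treatment of claim (2), however, has a genuine gap. Claim (2) quantifies over an arbitrary localization probability vector $\bm{l}$ --- in the LD loss this is the teacher's softened distribution, which in general has mass on all $n+1$ bins --- and asserts that the LD effect on $\bm{l}$ equals two KD effects on \emph{its} decomposition $\bm{p},\bm{q}$. The decomposition you exhibit, one-hot (Dirac) vectors at the two bracketing bins $e_i,e_{i+1}$ with interpolation weights, does not decompose $\bm{l}$: the combination $u_1\bm{p}+u_2\bm{q}$ with one-hot $\bm{p},\bm{q}$ is a two-hot vector supported on exactly two bins, which equals $\bm{l}$ only in the degenerate case where $\bm{l}$ is itself the ground-truth DFL encoding. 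You have decomposed the ground-truth label encoding, not the LD target. The paper closes precisely this hole with its Lemma 2: for any localization probability $\bm{l}$ with ground truth $y=u_1e_i+u_2e_j$, the linear system $\sum_k p_k=1$, $\sum_k q_k=1$, $u_1p_k+u_2q_k=l_k$ ($k=1,\dots,n$) has coefficient and augmented matrices of equal rank $n+1<2n$, hence infinitely many solutions $(\bm{p},\bm{q})$, each paired with the one-hot ground truths $\bm{g}^i,\bm{g}^j$ so that the two KD sub-problems are well posed. Your closing remark that ``the same argument applies verbatim to any two-component decomposition whose coefficients sum to one'' is true but presupposes exactly the existence statement that constitutes claim (2). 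The repair is small: either invoke the paper's Lemma 2, or observe directly that for $u_2\neq 0$ one may fix any probability vector $\bm{p}$ and set $\bm{q}=(\bm{l}-u_1\bm{p})/u_2$, whence $\sum_k q_k=(1-u_1)/u_2=1$, so decompositions exist in abundance; once one is in hand, your claim-(1) algebra applies unchanged and yields \eqref{eq:KDLD}.
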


The proof can be found in the \propref{appendix:1}.
Proposition \ref{pro1} provides 
the theoretical connection between LD and the classification KD. 
It shows that the optimization effects of LD on a float point number 
localization problem is functionally equivalent to two KD effects 
on the integer position classification problems.
%
%
Therefore, as a direct corollary of \cite{tang2020understanding}, 
LD holds the gradient rescaling to the distribution focal loss (DFL) 
\cite{gfocal} \wrt the relative prediction confidence at two near positions. 
For the details, we refer to the \coloref{appendix:2}.

\subsection{Valuable Localization Region}\label{sec:VLR}

Previous works mostly force the deep features of 
the student to mimic those of the teacher 
by minimizing the $l_2$ loss.
However, a straightforward question arises: 
Should we use the whole imitation regions without discrimination 
to distill the hybrid knowledge?
According to our observation, the answer is no.
In this subsection, we describe the valuable localization region (VLR) 
to further improve the distillation efficiency,
which we believe will be a promising way to train better student detectors.

Specifically, the distillation region is divided into two parts, 
the main distillation region and the valuable localization region.
The main distillation region is intuitively determined by label assignment, 
\ie the positive locations of the detection head.
The valuable localization region can be obtained by Algorithm~\ref{alg:VLR}.
First, we calculate the DIoU \cite{diou} matrix $\bm{X}$ between all the 
anchor boxes $\bm{B}^a$ and the ground-truth boxes $\bm{B}^{gt}$.
Then, we set the lower bound of DIoU to be $\alpha_{vl}=\gamma\alpha_{pos}$, 
where $\alpha_{pos}$ is the positive IoU threshold of label assignment.
The VLR can be defined as 
$\bm{V}=\{\alpha_{vl}\leqslant\bm{X}\leqslant\alpha_{pos}\}$.
Our method has only one hyperparameter $\gamma\leqslant1$, 
which controls the range of the VLRs.
When $\gamma=0$, all the locations whose DIoUs  
between the preset anchor boxes and the GT boxes satisfy 
$0\leqslant x_{ij}\leqslant\alpha_{pos}$ will be determined as VLRs.
When $\gamma\rightarrow 1$, the VLR will gradually shrink to empty.
Here we use DIoU \cite{diou} since it gives higher priority to the locations 
close to the center of the object.

\begin{algorithm}[!tb]	
  \small
  \caption{Valuable Localization Region}
  \label{alg:VLR}
\begin{algorithmic}[1]
  \Require{A set of anchor boxes $\bm{B}^a = \{\mathcal{B}^a_i\}$ and 
    a set of ground truth boxes $\bm{B}^{gt} = \{\mathcal{B}^{gt}_{j}\}$, $1\leqslant i\leqslant I$ , $1\leqslant j\leqslant J$. 
    Positive threshold $\alpha_{pos}$ of label assignment.
  }
  \Ensure{$\bm{V}=\{v_{ij}\}_{I\times J}, v_{ij}\in \{0,1\}$ 
    encodes final location of VLR, 
    where $1$ denotes VLR and $0$ indicates ignore.
  }
  \State Compute DIoU matrix $\bm{X}=\{x_{ij}\}_{I\times J}$ 
    with $x_{ij}=DIoU(\mathcal{B}^a_{i},\mathcal{B}^{gt}_j)$.
  \State $\alpha_{vl}=\gamma\alpha_{pos}$.
  \State Select locations with 
    $\bm{V}=\{\alpha_{vl}\leqslant\bm{X}\leqslant\alpha_{pos}\}$.
  \State\textbf{return} {$\bm{V}$}
\end{algorithmic}
\end{algorithm}

Similar to label assignment, 
our method assigns attributes to each location across multi-level FPN.
In this way, some of locations outside the GT boxes will also be considered.
So, we can actually view the VLR as 
an outward extension of the main distillation region.
Note that for anchor-free detectors, like FCOS, 
we can use the preset anchors on feature maps and 
do not change its regression form, 
so that the localization learning maintains to be the anchor-free type.
While for anchor-based detectors which usually set multiple anchors 
per location, 
we unfold the anchor boxes to calculate the DIoU matrix, 
and then assign their attributes.

\begin{table*}[t]
  \centering
  \caption{\textbf{Ablations}. We show ablation experiments for LD and VLR 
    on MS COCO val2017.
  } \vspace{-12pt}\label{tab:ablation}
  \subfloat[\textbf{Temperature $\tau$ in LD}: 
  The generalized Softmax function with large $\tau$ brings considerable gains. 
  We set $\tau=10$ by default. 
  The teacher is ResNet-101 and the student is ResNet-50.]{
  \tablestyle{3pt}{1.2}
  \begin{tabular}{c|c c c|c c c}
  $\tau$ & AP & AP$_{50}$ & AP$_{75}$ & AP$_{S}$ & AP$_{M}$ & AP$_{L}$\\ \shline
   -- & 40.1 & 58.2 & 43.1 & 23.3 & 44.4 & 52.5 \\ \hline
   1  & 40.3 & 58.2 & 43.4 & 22.4 & 44.0 & 52.4 \\
   5  & 40.9 & 58.2 & 44.3 & 23.2 & \hlt{45.0} & 53.2\\
   10 & \hlt{41.1} & \hlt{58.7} & \hlt{44.9} & \hlt{23.8} & 44.9 & \hlt{53.6}\\
   15 & 40.7 & 58.5 & 44.2 & 23.5 & 44.3 & 53.3 \\
   20 & 40.5 & 58.3 & 43.7 & \hlt{23.8} & 44.1 & 53.5
  \end{tabular}} \hfill
  \subfloat[\textbf{LD \vs~Pseudo BBox Regression \cite{chen2017learning}}: 
    The localization knowledge can be more efficiently transferred by 
    our LD compared to the pseudo bbox regression. 
    The teacher is ResNet-101 and the student is ResNet-50.]{
  \tablestyle{3pt}{1.2}
  \begin{tabular}{c|c c c|c c c}
  $\varepsilon$ & AP &AP$_{50}$&AP$_{75}$&AP$_{S}$&AP$_{M}$&AP$_{L}$ \\ \shline
   --  & 40.1 & 58.2 & 43.1 & 23.3 & 44.4 & 52.5 \\ \hline
   0.1 & 40.5 & 58.3 & 43.8 & 23.0 & 44.2 & 52.7\\
   0.2 & 40.2 & 58.2 & 43.6 & 23.1 & 44.0 & 53.0 \\
   0.3 & 40.1 & 58.4 & 43.1 & 23.6 & 43.9 & 52.5 \\
   0.4 & 40.3 & 58.4 & 43.4 & 22.8 & 44.0 & 52.6 \\
   \hline
   LD  &\hlt{41.1}&\hlt{58.7}&\hlt{44.9}&\hlt{23.8}&\hlt{44.9}&\hlt{53.6}\\
  \end{tabular}}\hfill
  \subfloat[\textbf{Role of $\gamma$ in VLR}: 
    Conducting LD on valuable localization region has a positive effect 
    on performance. 
    We set $\gamma=0.25$ by default.
    The teacher is ResNet-101 and the student is ResNet-50.]{
  \tablestyle{3pt}{1.2}
  \begin{tabular}{c|c c c|c c c}
  $\gamma$ & AP &AP$_{50}$&AP$_{75}$& AP$_{S}$ & AP$_{M}$ & AP$_{L}$ \\ \shline
   --  & 40.1 & 58.2 & 43.1 & 23.3 & 44.4 & 52.5 \\ \hline
   1   & 41.1 & 58.7 & 44.9 & 23.8 & 44.9 & 53.6 \\
   0.75& 41.2 & 58.8 & 44.9 & 23.6 & 45.4 & 53.5 \\
   0.5 & 41.7 & 59.4 & 45.3 & 24.2 & 45.6 & 54.2 \\
   0.25& \hlt{41.8} & \hlt{59.5} & \hlt{45.4} & 24.2 & 45.8 & \hlt{54.9} \\
   0   & 41.7 & \hlt{59.5} & \hlt{45.4} & \hlt{24.5} & \hlt{45.9} & 54.0
  \end{tabular}}
\end{table*}

\subsection{\minor{Selective Region Distillation}}\label{sec:separate}

Given the above descriptions, the total loss of logit mimicking 
for training the student $\bm S$ can be represented as:
\begin{equation} \label{eqn:total_loss}
\begin{aligned}
\mathcal{L}=&\lambda_0\LossT{cls}(\mathcal{C}_S,\mathcal{C}^{gt})
  +\lambda_1\LossT{reg}(\mathcal{B}_S,\mathcal{B}^{gt})+
  \lambda_2\LossT{DFL}(\mathcal{B}_S,\mathcal{B}^{gt})\\
  +&\lambda_3\mathds{I}_\text{Main}\LossT{LD}(\mathcal{B}_S,\mathcal{B}_T)
  +\lambda_4\mathds{I}_\text{VL}\LossT{LD}(\mathcal{B}_S, \mathcal{B}_T)\\
  +&\lambda_5\mathds{I}_\text{Main}\LossT{KD}(\mathcal{C}_S, \mathcal{C}_T)
  +\lambda_6\mathds{I}_\text{VL}\LossT{KD}(\mathcal{C}_S, \mathcal{C}_T),
\end{aligned}
\end{equation}
where the first three terms are exactly the same to the classification 
and bounding box regression branches for any regression-based detector, 
\ie $\LossT{cls}$ is the classification loss, 
$\LossT{reg}$ is the bounding box regression loss and 
$\LossT{DFL}$ is the distribution focal loss \cite{gfocal}.
$\mathds{I}_\text{Main}$ and $\mathds{I}_\text{VL}$ are the distillation 
masks for the main distillation region and the valuable localization region, 
respectively.
$\LossT{KD}$ is the KD loss~\cite{hinton2015distilling},  
$\mathcal{C}_S$ as well as $\mathcal{C}_T$ denote the classification 
head output logits of the student and the teacher, respectively, 
and $\mathcal{C}^{gt}$ is the ground-truth class label.

All the distillation losses will be weighted by the same weight 
factors according to their types.
More clearly, the weight factor of the LD loss follows that of the 
bbox regression term and the weight factor of the KD loss follows 
that of the classification term.
Also, it is worth mentioning that the DFL loss term can be disabled 
since LD loss has sufficient guidance ability.
In addition, we can enable or disable the four types of distillation losses 
so as to distill the student in different regions selectively.

\section{Experiment} \label{sec:results}

In this section, we conduct comprehensive ablation studies and analysis to demonstrate the superiority of the proposed LD and distillation scheme on the challenging large-scale MS COCO \cite{coco} benchmark, PASCAL VOC \cite{voc}, and aerial image DOTA dataset \cite{dota}.

\subsection{Experiment Setup}

\myPara{MS COCO.}
The train2017 (118K images) is utilized for training and 
val2017 (5K images) is used for validation.
We also obtain the evaluation results on MS COCO test-dev 2019 
(20K images) by submitting to the COCO server.
The experiments are conducted under the mmDetection 
\cite{mmdetection} framework.
Unless otherwise stated, we use ResNet \cite{ResNet} with FPN \cite{FPN} 
as our backbone and neck networks, 
and the FCOS-style \cite{FCOS} anchor-free head for classification 
and localization.
The training schedule for ablation experiments is set to single-scale 
1$\times$ mode (12 epochs).
For other training and testing hyper-parameters, 
we follow exactly the GFocal \cite{gfocal} protocol, 
including QFL loss for classification and GIoU loss for bbox regression, \etc.
We use the standard COCO-style measurement, \ie average precision (AP), 
for evaluation.
All the baseline models are retrained by adopting the same settings 
so as to fairly compare them with our LD.

\myPara{PASCAL VOC.}
We also provide experimental results on another popular object detection 
benchmark, \ie PASCAL VOC \cite{voc}.
We use the VOC 07+12 training protocol, \ie the union of 
VOC 2007 trainval set and VOC 2012 trainval set (16551 images) for training, 
and VOC 2007 test set (4952 images) for evaluation.
The initial learning rate is 0.01 and the total training epochs are set to 4.
The learning rate decreases by a factor of 10 after the 3rd epoch.
For comprehensively evaluating the localization performance, 
the average precision (AP) along with 5 mAP across different IoU thresholds 
are reported, 
\ie AP$_{50}$, AP$_{60}$, AP$_{70}$, AP$_{80}$ and AP$_{90}$.

\myPara{DOTA.}
As for the evaluation of rotated LD, we report the detection results 
on the classic aerial image dataset DOTA \cite{dota}.
We follow the standard mmRotate \cite{zhou2022mmrotate} 
training and testing protocol.
The train set and validation set consist of 1403 images and 468 images, 
respectively, which are randomly selected in literature.
These huge images are cropped into smaller subimages with shape 
$600\times600$, 
which is in line with the cropping protocol in official implementation.
In practice, we obtain about 15,700 training and 5,300 validation patches.
Unless otherwise stated, all the hyper-parameters follow the default settings 
of mmRotate for a fair comparison.
We report results in terms of AP and 5 mAPs under different IoU thresholds, 
which is consistent with PASCAL VOC.
Due to the memory limitation, the teachers are ResNet-34 FPN with 
2$\times$ training schedule (24 epochs), 
and the students are ResNet-18 FPN with 1$\times$ training schedule 
(12 epochs).

\subsection{Ablation Study}

\myPara{Temperature $\tau$ in LD.}
Our LD introduces a hyper-parameter, \ie the temperature $\tau$.
\tabref{tab:ablation}(a) reports the results of LD with various temperatures, 
where the teacher model is ResNet-101 with AP 44.7 and the student model 
is ResNet-50.
Here, only the main distillation region is adopted.
Compared to the first row in \tabref{tab:ablation}(a), different temperatures
consistently lead to better results.
In this paper, we simply set the temperature in LD as $\tau=10$, 
which is fixed in all the other experiments.

\myPara{LD {\em vs}. Pseudo BBox Regression.}
The teacher bounded regression (TBR) loss~\cite{chen2017learning} 
is a preliminary attempt to enhance the student on the localization head, 
\ie the pseudo bbox regression in \figref{fig:previous},
which is represented as:
\begin{equation} \label{eq:tbr}
\begin{aligned}
  \LossT{TBR}=\lambda\LossT{reg}(\mathcal{B}^{s},\mathcal{B}^{gt}),
  \text{if}~\ell_2(\mathcal{B}^{s},\mathcal{B}^{gt})+\varepsilon 
  > \ell_2(\mathcal{B}^{t},\mathcal{B}^{gt}),\\
\end{aligned}
\end{equation}
where $\mathcal{B}^{s}$ and $\mathcal{B}^{t}$ denote the predicted boxes 
of student and teacher respectively, 
$\mathcal{B}^{gt}$ denotes the ground truth boxes, 
$\varepsilon$ is a predefined margin, 
and $\LossT{reg}$ represents the GIoU loss \cite{giou}.
Here, only the main distillation region is adopted.
From \tabref{tab:ablation}(b), we can see that the TBR loss does 
yield performance gains (+0.4 AP and +0.7 AP$_{75}$) 
when using proper threshold $\varepsilon=0.1$ in \eqref{eq:tbr}.
However, it uses the coarse bbox representation, 
which does not contain any localization uncertainty information 
of the detector, leading to sub-optimal results.
On the contrary, our LD directly produces 41.1 AP and 44.9 AP$_{75}$, 
since it utilizes the probability distribution of bounding boxes 
which contains rich localization knowledge.

\myPara{Various $\gamma$ in VLR.}
The newly introduced VLR has the parameter $\gamma$ which controls the range of VLR.
As shown in \tabref{tab:ablation}(c), 
AP is stable when $\gamma$ ranges from 0 to 0.5.
The variation in AP in this range is around 0.1.
As $\gamma$ increases, the VLR gradually shrinks to empty.
The performance also gradually drops to 41.1, 
\ie conducting LD on the main distillation region only.
The sensitivity analysis experiments on the parameter $\gamma$ indicate
that conducting LD on the VLR has a positive effect on performance.
In the rest experiments, we set $\gamma$ to 0.25 for simplicity.

\begin{table}[!t]
  \small
  \centering
  \renewcommand\arraystretch{1.1}
  \setlength{\tabcolsep}{2pt}
  \caption{Evaluation of \textbf{selective region distillation} 
    for KD and our LD. 
    The teacher-student pair is ResNet-101$\rightarrow$ResNet-50 for COCO, 
    and ResNet-101$\rightarrow$ResNet-18 for VOC 07+12.
  }
  \begin{tabular}{ccccc|c c c | c c c} \hline \hline
  \multicolumn{2}{c}{LD} && \multicolumn{2}{c|}{KD} & \multicolumn{3}{c|}{MS COCO val2017} & \multicolumn{3}{c}{VOC 07+12} \\
  \cline{1-2}\cline{4-5} \cline{6-8} \cline{9-11} 
    Main & VLR && Main & VLR & AP & AP$_{50}$ & AP$_{75}$ & AP & AP$_{50}$ & AP$_{75}$ \\ \hline
          &       &&       &       & 40.1     & 58.2     & 43.1 & 51.8 & 75.8 & 56.3 \\ \hline
    \chmk &       &&       &       & 41.1     & 58.7     & 44.9 & 53.0 & 75.9 & 57.6 \\
    \chmk & \chmk &&       &       & 41.8     & 59.5     & 45.4 & 53.4 & 76.3 & \hlt{58.3} \\
    \chmk & \chmk && \chmk &       &\hlt{42.1}&\hlt{60.3}&\hlt{45.6} & 53.1 & 76.8 & 57.6\\
    \chmk & \chmk && \chmk & \chmk & 42.0     & 60.0     & 45.4  & \hlt{53.7} & \hlt{77.3} & 58.2 \\
    \hline
    
    \hline
  \end{tabular}
  \label{tab:separateregion} 
\end{table}

\myPara{Selective Region Distillation.}
\minor{
There are several interesting observations regarding the roles of KD and LD 
and their preferred regions.
We report the relevant ablation study results in \tabref{tab:separateregion}, 
where ``Main"  means that the logit mimicking is conducted 
on the main distillation region, 
\ie the positive locations of label assignment, 
and "VLR" denotes the valuable localization region.
For MS COCO, it can be seen that conducting ``Main LD", 
``VLR LD", and ``Main KD" all benefits the student's performance.
This indicates that the main distillation regions contain the valuable knowledge
for both classification and localization and the classification KD benefits less
compared to LD.
Then, we impose the classification KD on a larger range, \ie the VLR.
However, we observe that further incorporating ``VLR KD" yields no improvement
(the last two rows of \tabref{tab:separateregion}).
This is the main reason why we adopt the proposed selective region distillation for COCO.
}

Next, we check the roles of KD and LD on PASCAL VOC.
\tabref{tab:separateregion} shows that it is beneficial to transfer the localization knowledge to both the main distillation region and the VLR.
However, due to the different knowledge distribution patterns, it shows a similar degradation of the classification KD.
Comparing the 3rd row and the 4th row of \tabref{tab:separateregion}, ``Main KD'' leads to a performance drop,
while ``VLR KD" produces a positive effect to the student.
This indicates that the selective region distillation can take the advantages of both KD and LD 
on their respective favorable regions.

  \begin{table}[!t]\small
    \centering
    \renewcommand\arraystretch{1.1}
    \setlength{\tabcolsep}{5pt}
    \caption{Quantitative results of LD for lightweight detectors. The teacher is ResNet-101. The results are reported on MS COCO val2017.}
    
        \begin{tabular}{c|c|c c c|c c c}
          \hline
  
          \hline
          Student & LD & AP & AP$_{50}$ & AP$_{75}$ & AP$_{S}$ & AP$_{M}$ & AP$_{L}$  \\
          \hline
          \multirow{2}{*}{ResNet-18}  &  & 35.8  & 53.1 & 38.2 &  18.9 &  38.9 &  47.9\\
          & \chmk & 37.5  & 54.7 & 40.4 & 20.2 & 41.2 & 49.4 \\
                  \hline
          \multirow{2}{*}{ResNet-34}&  & 38.9 & 56.6 & 42.2 &  21.5 &  42.8 & 51.4 \\
          &  \chmk &  41.0 & 58.6 & 44.6 & 23.2 & 45.0 & 54.2 \\
          \hline
          \multirow{2}{*}{ResNet-50}&   & 40.1  & 58.2 & 43.1 &  23.3 & 44.4  & 52.5 \\
          & \chmk  & 42.1  & 60.3 & 45.6 & 24.5 & 46.2  & 54.8 \\
          \hline
  
          \hline
      \end{tabular}
      \label{tab:testdev1}
  \end{table}
  
  \myPara{LD for Lightweight Detectors.}
  \tabref{tab:testdev1} reports the results of our distillation scheme \minor{(``Main LD + VLR LD + Main KD" on COCO)}, where a series of lightweight students are distilled, including ResNet-18, ResNet-34, and ResNet-50.
  For all given students, our LD can stably improve the detection performance without any bells and whistles.
  From these results, we can see that our LD improves the students ResNet-18, ResNet-34, ResNet-50 by +1.7, +2.1, +2.0 in AP, and +2.2, +2.4, +2.5 in AP$_{75}$, respectively.
  %
  %
  %

  \begin{table}[!tb]\small
    \centering
    \renewcommand\arraystretch{1.1}
    \setlength{\tabcolsep}{4pt}
    \caption{Quantitative results of LD on various popular dense object detectors. The teacher is ResNet-101 and the student is ResNet-50. The results are reported on MS COCO val2017.}		
        \begin{tabular}{c|c|c c c|c c c}
          \hline
  
          \hline
          Student & LD & AP & AP$_{50}$ & AP$_{75}$ & AP$_{S}$ & AP$_{M}$ & AP$_{L}$  \\
          \hline
          \multirow{2}{*}{RetinaNet \cite{lin2017focal}}  &  & 36.9  & 54.3 & 39.8 &  21.2 &  40.8 &  48.4\\
          & \chmk & 39.0  & 56.4 & 42.4 &  23.1 &  43.2 &  51.1\\
                  \hline
          \multirow{2}{*}{FCOS \cite{FCOS}}&  & 38.6 & 57.2 & 41.5 &  22.4 &  42.2 & 49.8 \\
          &  \chmk &  40.6 & 58.4 & 44.1 &  24.3 &  44.1 & 52.3 \\
          \hline
          \multirow{2}{*}{ATSS \cite{ATSS}}&   & 39.2  & 57.3 & 42.4 &  22.7 & 43.1  & 51.5 \\
          & \chmk  & 41.6  & 59.3 & 45.3 & 25.2 & 45.2  & 53.3 \\
          \hline
  
          \hline
        \end{tabular}
      \label{tab:other}
  \end{table}
  
  \myPara{Application to Other Dense Object Detectors.}
  Our LD can be flexibly applied to other dense object detectors, including
  either anchor-based or anchor-free types.
  We employ LD with the divide-and-conquer distillation scheme to several recently popular detectors, such as RetinaNet \cite{lin2017focal} (anchor-based), FCOS \cite{FCOS} (anchor-free) and ATSS \cite{ATSS} (anchor-based).
  According to the results in \tabref{tab:other}, we can see that 
  our LD can consistently improve the baselines by around 2 AP scores.
  
  \myPara{Arbitrary-Oriented Object Detectors.}
  As a direct extension of our LD, the rotated bounding box requires an additional probability distribution, i.e., the rotated angle distribution.
  We make the necessary and minimum modification to two arbitrary-oriented object detectors, 1) the foundation of dense regression-based rotated detector---Rotated-RetinaNet \cite{lin2017focal} and 2) the recently popular 2D Gaussian distribution modeling detector---GWD \cite{GWD}.
  We follow the mmRotate \cite{zhou2022mmrotate} training and testing protocols.
  We use ResNet-34 as the teacher and ResNet-18 as the student for GPU memory saving.
  The results are reported on the validation set of DOTA-v1.0 \cite{dota}.
  
  \begin{table}[!tb]\small
    \centering
    \renewcommand\arraystretch{1.1}
    \setlength{\tabcolsep}{4.5pt}
    \caption{Quantitative results of rotated LD on the popular arbitrary-oriented object detectors. The teacher is ResNet-34 and the student is ResNet-18. The results are reported on the validation set of DOTA-v1.0.}		
        \begin{tabular}{c|c|c c c c c c}
          \hline
  
          \hline
          Student & AP & AP$_{50}$ & AP$_{60}$ & AP$_{70}$ & AP$_{80}$ & AP$_{90}$  \\
          \hline
          R-RetinaNet \cite{lin2017focal}   & 33.7  & 58.0 & 54.5 &  42.3 &  22.9 &  4.7\\
          LD (ours) & 39.1  & 63.8 & 61.1 &  48.8 &  28.7 &  8.8\\
                  \hline
          GWD \cite{GWD}  & 37.1 & 63.1 & 60.1 &  46.7 &  24.7 & 6.2 \\
          LD (ours) &  40.2 & 66.4 & 63.6 &  50.3 &  28.2 & 8.5 \\
          \hline
  
          \hline
        \end{tabular}
      \label{tab:rotated}
  \end{table}
  
  The results have been shown in \tabref{tab:rotated}, which demonstrates that 
  our LD can also be successfully applied to rotated object detectors and 
  attain considerable improvement in aerial image detection.
  Particularly, we obtain impressive improvements for the mAP under more rigorous IoU thresholds, e.g., AP$_{70}$, AP$_{80}$, AP$_{90}$.
  This shows the excellent compatibility of our LD, which can be applied to not only horizontal bounding boxes but also the rotated ones.
  In addition, it is worth mentioning that our LD does not rely on the representations of bounding boxes and the optimization way of modeling (IoU-based loss for horizontal bounding box prediction \cite{giou,diou} and 2D Gaussian modeling for rotated bounding box prediction \cite{GWD}).
  
  \subsection{Logit Mimicking {\em v.s.} Feature Imitation.}
  Thus far, we have validated the effectiveness of our LD and the selective region distillation in distilling different types of object detectors.
  The proposed LD along with the classification KD provides a unified
  logit mimicking framework.
  It naturally raises several interesting questions:
  \begin{itemize}
      \item In terms of detection performance, how does logit mimicking perform compared to feature imitation? Does feature imitation stay ahead of logit mimicking?
      \item What are the characteristics of these two different distillation techniques? Are the deep feature representations and logits learned different?
  \end{itemize}
  In this subsection, we shall provide answers to the above questions.
  
  \myPara{Quantitative Comparison on Numerical Results.}
  We first compare our proposed LD with several state-of-the-art feature imitation methods.
  We adopt the selective region distillation, i.e., 
  performing KD and LD on the main distillation region, and performing LD on the VLR.
  Since modern detectors are usually equipped with FPN~\cite{FPN}, following previous works \cite{wang2019distilling,GIbox,defeat}, we re-implement their methods and impose all the feature imitations on multi-level FPN for a fair comparison.
  Here, ``FitNets" \cite{FitNets} distills 
  the whole feature maps.
  ``DeFeat" \cite{defeat} means the loss weights of feature imitation outside the GT boxes are larger than those inside the GT boxes.
  ``Fine-Grained" \cite{wang2019distilling} distills the deep features on the close anchor box locations.
  ``GI Imitation" \cite{GIbox} selects the distillation regions according to the discriminative predictions of the student and the teacher.
  ``Inside GT Box" means we select the ground-truth boxes 
  with the same stride on the FPN layers as the feature imitation regions.
  ``Main Region" means we imitate the features within the main distillation region.

  \begin{table}[!t]\small
    
    \centering
    
    \renewcommand\arraystretch{1.15}
    \setlength{\tabcolsep}{3pt}
    \caption{\textbf{Logit Mimicking {\em vs}. Feature Imitation.} ``Ours" means we use the selective region distillation, i.e., ``Main LD + VLR LD + Main KD". \minor{``*" denotes we remove the ``Main KD".} The teacher is ResNet-101 and the student is ResNet-50 \cite{ResNet}. The results are reported on MS COCO val2017.}
        \begin{tabular}{l|c c c|c c c}
          \hline
          
          \hline
          Method & AP & AP$_{50}$ & AP$_{75}$ & AP$_{S}$ & AP$_{M}$ & AP$_{L}$  \\
                  \hline
                  
          Baseline (GFocal \cite{gfocal}) & 40.1 & 58.2 & 43.1 & 23.3 & 44.4 & 52.5 \\
          \hline
                  FitNets \cite{FitNets} & 40.7 & 58.6 & 44.0 & 23.7 & 44.4 & 53.2 \\
                  Inside GT Box  & 40.7 & 58.6 & 44.2 & 23.1 & 44.5 & 53.5 \\
                  Main Region  & 41.1 & 58.7 & 44.4 & 24.1 & 44.6 & 53.6 \\
                  Fine-Grained \cite{wang2019distilling} & 41.1 & 58.8 & 44.8 & 23.3 & 45.4 & 53.1 \\
                  DeFeat \cite{defeat} & 40.8 & 58.6 & 44.2 & 24.3 & 44.6 & 53.7 \\
                  GI Imitation \cite{GIbox} & 41.5 & 59.6 & 45.2 & 24.3 & 45.7 & 53.6 \\
                  \hline
                  Ours  & 42.1 & 60.3 & 45.6 & 24.5 & 46.2 & 54.8 \\
                  Ours + FitNets  & 42.1 & 59.9 & 45.7 & 25.0 & 46.3 & 54.4 \\
                  Ours + Inside GT Box  & 42.2 & 60.0 & 45.9 & 24.3 & 46.3 & 55.0 \\
                  Ours + Main Region  & 42.1  & 60.0 & 45.7 & 24.6 & 46.3 & 54.7 \\
                  Ours + Fine-Grained  & 42.4 & 60.3 & 45.9 & 24.7 & 46.5 & 55.4 \\
                  \minor{Ours* + Fine-Grained}  & 42.1 & 59.7 & 45.6 & 24.8 & 46.1 & 54.8 \\
                  Ours + DeFeat  & 42.2 & 60.0 & 45.8 & 24.7 & 46.1 & 54.4 \\
                  Ours + GI Imitation  & 42.4 & 60.3 & 46.2 & 25.0 & 46.6 & 54.5 \\
                  \hline
          
          \hline
        \end{tabular}
    \label{tab:compare}
  \end{table}
  \begin{figure}[t]
    \centering
    \includegraphics[width=0.48\textwidth]{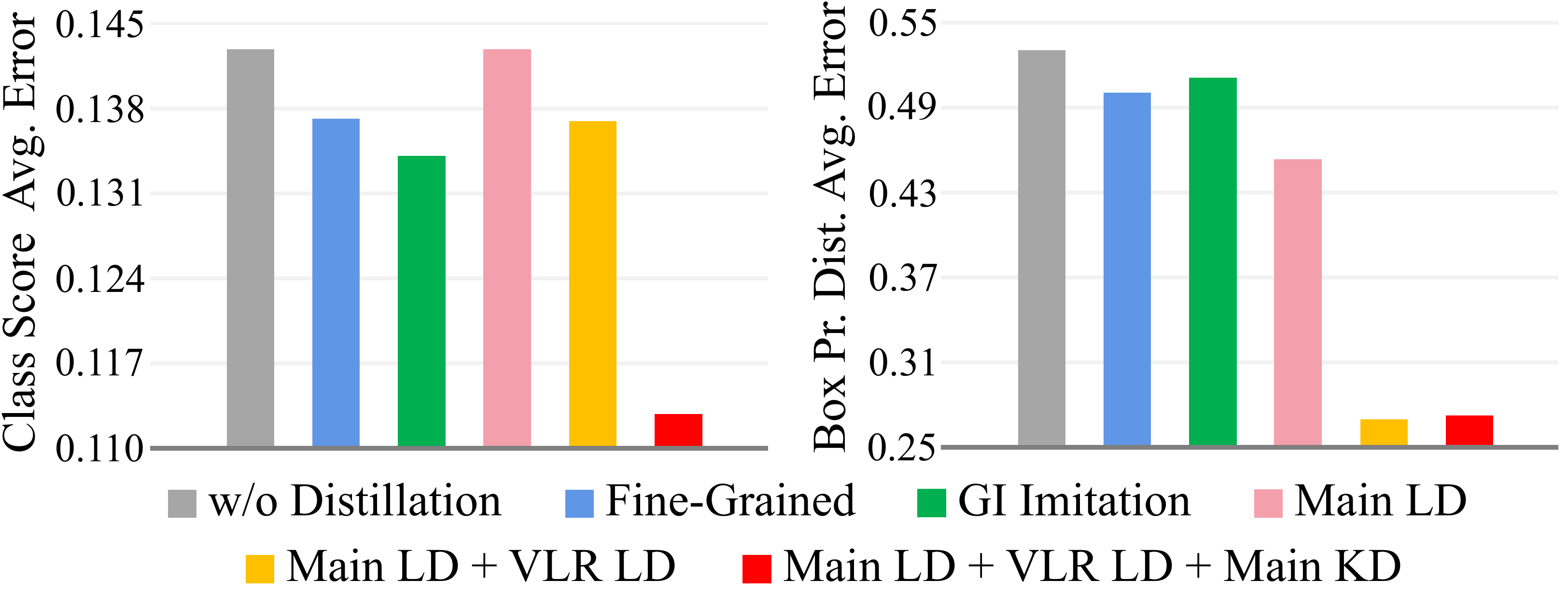}
    \caption{Visual comparisons of SOTA feature imitation and our LD. We show the average L1 error of classification scores and box probability distributions between teacher and student at the P4, P5, P6 and P7 FPN levels. The teacher is ResNet-101 and the student is ResNet-50. The results are evaluated on MS COCO val2017.}
    \label{fig:error}
  \end{figure} 
  \begin{figure*}[!t]
    \centering
    \includegraphics[width=\textwidth]{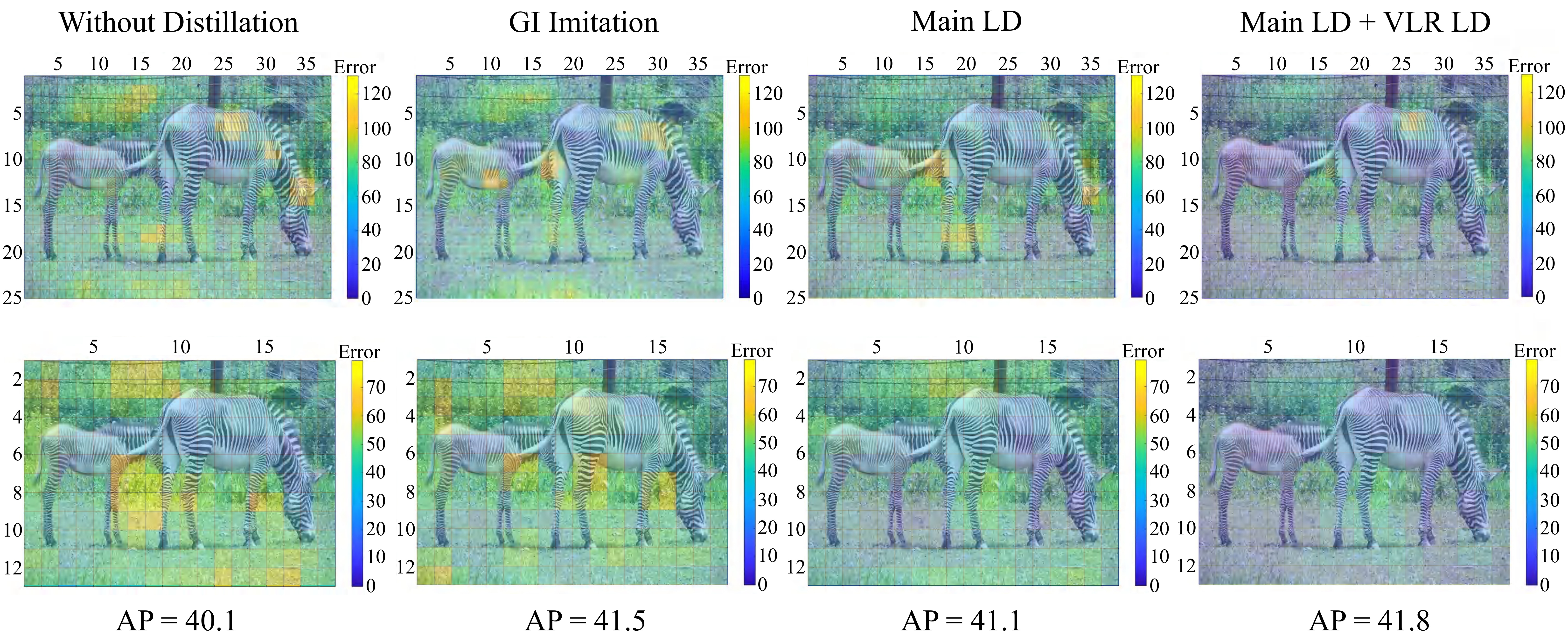}
    \vspace{-20pt}
    \caption{Visual comparisons between the state-of-the-art feature imitation and our LD. We show the per-location L1 error summation of the localization head logits between the teacher and the student as the P5 (first row) and P6 (second row) FPN levels. The teacher is ResNet-101 and the student is ResNet-50. We can see that compared to the GI imitation \cite{GIbox}, our method (``Main LD + VLR LD") can
    significantly reduce the errors for almost all the locations.
    Darker is better. Best viewed in color.}
    \label{fig:visual}
  \end{figure*}
  From \tabref{tab:compare}, we can see that distillation
  within the whole feature maps attains +0.6 AP gains.
  By setting a larger loss weight for the locations outside the GT boxes (DeFeat \cite{defeat}), the performance is slightly better than that
  using the same loss weight for all locations.
  Fine-Grained \cite{wang2019distilling} focusing on the locations near GT boxes, produces 41.1 AP, which is comparable to the results of feature imitation using the Main Region.
  GI imitation \cite{GIbox} searches the discriminative patches for feature imitation and gains 41.5 AP.
  Due to the large gap in predictions between student and teacher,
  the imitation regions may appear anywhere.
  
  Despite the notable improvements of these feature imitation methods, they do not explicitly consider the knowledge distribution patterns.
  On the contrary, our method can transfer the knowledge via a selective region distillation, which directly produces 42.1 AP.
  It is worth noting that our method operates on logits instead of deep features, indicating that our LD is a critical component for logit mimicking to outperform the feature imitation.
  Moreover, our method is orthogonal to the aforementioned feature imitation methods.
  \tabref{tab:compare} shows that with these feature imitation methods, our performance can be further improved.
  Particularly, with GI imitation, we improve the strong GFocal baseline by +2.3 AP and +3.1 AP$_{75}$.

  \myPara{Teacher-Student Error Comparison.}
  We first check the average teacher-student errors of the classification scores and the box probability distributions, as shown in Fig \ref{fig:error}.
  One can see that the Fine-Grained feature imitation \cite{wang2019distilling} and GI imitation \cite{GIbox} reduce the two errors as expected, since the classification knowledge and localization knowledge are mixed on feature maps.
  Our ``Main LD" and ``Main LD + VLR LD" have comparable or larger classification score average errors than Fine-Grained \cite{wang2019distilling} and GI imitation \cite{GIbox} but lower box probability distribution average errors.
  This indicates that these two settings
  with only LD can significantly reduce the box probability distribution distance between the teacher and the student
  but they cannot reduce this error for the classification head.
  If we impose the classification 
  KD on the main distillation region, 
  yielding ``Main LD + VLR LD + Main KD", 
  both the classification score average error and the box probability distribution average error can be reduced.
  
  We also visualize the L1 error summation of the localization head logits between the student and the teacher for each location at the P5 and P6 FPN levels.
  As shown in \figref{fig:visual}, comparing to ``Without Distillation", we can see that the GI imitation \cite{GIbox} does decrease the localization discrepancy between the teacher and the student.
  Notice that we particularly choose a model (``Main LD + VLR LD") with slightly better AP performance than GI imitation for visualization.
  Our method can clearly reduce this error and alleviate the localization ambiguity.
  
\begin{figure*}[!t]
  \centering
  \setlength{\abovecaptionskip}{5pt}
  \includegraphics[width=1\textwidth]{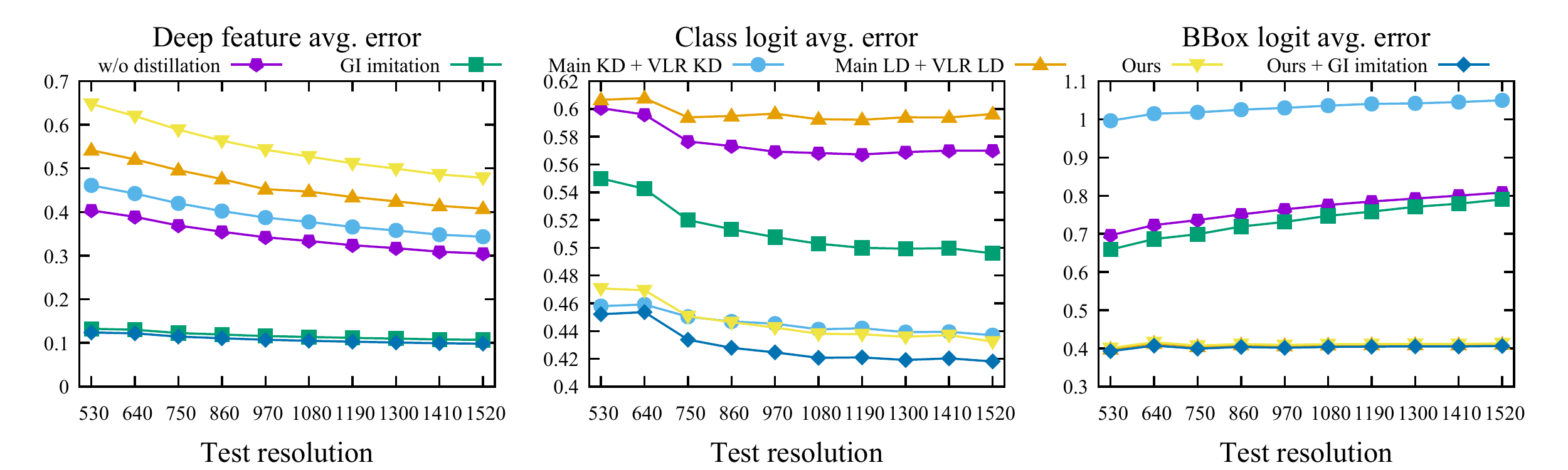}
  \vspace{-17pt}
  \caption{Average teacher-student error on (left) deep feature representation, (middle) class logits, and (right) bbox logits. ``Ours'' denotes ``Main LD + VLR LD + Main KD". The curves are evaluated on MS COCO val2017.}
  \label{fig:curve}
  
\end{figure*}
  
\begin{table}[!t]\small
  \centering
  \renewcommand\arraystretch{1.2}
  \setlength{\tabcolsep}{4pt}
  \caption{The average Pearson correlation coefficient between the teacher-student pair. 'GI': GI imitation. 'Ours': our logit mimicking scheme with the selective region distillation. The results are evaluated on MS COCO val2017.}
    \begin{tabular}{c|cccc}
      \hline
        
      \hline
      & w/o distillation & GI& Ours & Ours + GI\\
      \hline
        deep features & -0.0042 & 0.8175 & -0.0031 & 0.8373\\
        bbox logits & 0.9222 & 0.9326 & 0.9733 & 0.9745\\
      \hline
      
      \hline
    \end{tabular}
    \label{tab:pearson}
\end{table}

In \figref{fig:curve}, we plot the average errors between the student and the teacher in terms of deep feature, class logit and bbox logit, respectively.
It can be seen that these three types of errors show an almost consistent trend as the test resolution changes. 
Interestingly, we find that even though the logit mimicking can shrink the errors of both the bbox logits and the classification ones, it learns complete different feature representations from the teacher's.
From the left side of \figref{fig:curve}, our method enlarges the distance between the student's feature representations and those of the teacher.
Moreover, \tabref{tab:pearson} shows that the logit mimicking produces a nearly zero Pearson correlation coefficient for the feature representations between the teacher-student pair.
This indicates that if the student is only trained with logit mimicking, it produces a far different and nonlinearly correlated feature representation to teacher's.
%
Be that as it may, we can still attain well-performed logits for good generalization.
The last column of \tabref{tab:pearson} and \figref{fig:curve} show that the logit mimicking is able to approach the teacher's logits not only in distance but also in linear correlation.
%

\myPara{AP Landscape.}
Distilling an object detector from either the feature level or the logit level is a high-dimensional non-convex optimization problem, which is easy in practice but hard in theory.
To better understand the behavior of logit mimicking and feature imitation, we present a new visualization method, termed AP landscape, which is especially designed for object detection to observe the AP changes caused by minute perturbations in the learnt feature representations.
A canonical approach was taken in \cite{losslandscape}, who studied the loss surface visualization by linearly interpolating the parameters of two networks.
\begin{figure}[!t]
  \flushright
  \small
  \begin{overpic}[width=1\columnwidth]{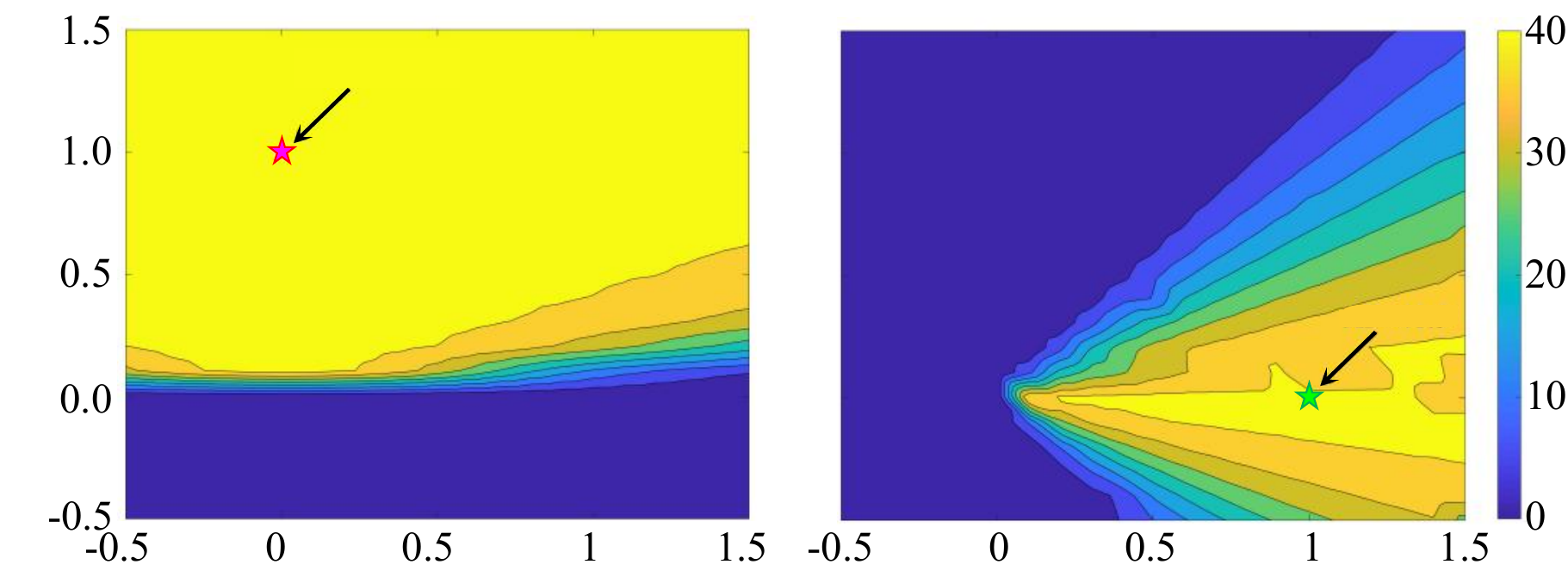}
      \put(26.3, 36.5){$\alpha$}
      \put(73.2, 36.5){$\alpha$}
      \put(0, 18){$\beta$}
      \put(23.9, -3.2){Ours}
      \put(64, -3.2){GI Imitation}
      \put(19,31.6){\footnotesize{AP=42.1}}
      \put(81.4,16){\footnotesize{AP=41.5}}
  \end{overpic} 
  \vspace{-10pt}
  \caption{
    The 2D contour plots of AP landscapes in feature subspace. 
    The AP landscapes are evaluated on MS COCO val2017.
  }\label{fig:APlandscape}
\end{figure}

In our visualization, we are particularly curious about the empirical characterization of the feature representations and how they affect the final performance.
Considering two feature representations $M_f$, $M_l$ which are learnt by the detectors trained with feature imitation and logit mimicking, respectively, we visualize the AP landscapes within the 2D projected space $M_f\oplus M_l$.
We use two scalar parameters $\alpha$ and $\beta$ to obtain a new feature representation by using the weighted sum $M(\alpha,\beta)=\alpha M_f+\beta M_l$.
Note that when $\alpha=0$ and $\beta=1$, it represents that the feature representations are predicted by the logit mimicking method and inversely the feature imitation when $\alpha=1$ and $\beta=0$.
Then, we feed $M(\alpha,\beta)$ to the downstream heads and plot the final AP score.
Due to the computational burden, we set $\alpha,\beta \in [-0.5,1.5] $ to visualize the 2D AP landscapes.

From \figref{fig:APlandscape}, we see that logit mimicking learns
robust feature representations, i.e., the red pentagram at $(0,1)$, which is surrounded by a flat and well-performed region of AP score.
Second, we observe that the GI imitation produces a much sharper AP landscape than logit mimicking.
We attribute the landscape sharpness of the GI imitation to 
the hard $l_2$ loss supervision.
In this case, it is hard for the student to imitate the high-level and advanced feature representations from the teacher,
which corresponds to a heavy detector with a longer training schedule and higher accuracy.
On the contrary, the logit mimicking gives the feature representations much more liberty to learn, leading to a better generalization.
As shown in \figref{fig:training_difficulty}, logit mimicking can also reduce the optimization difficulty in the early training stage, while feature imitation converges slower and has a poor generalization in the early training stage.

\begin{figure}[!t]
  \centering
  \vspace{-10pt}
  \includegraphics[width=0.9\linewidth]{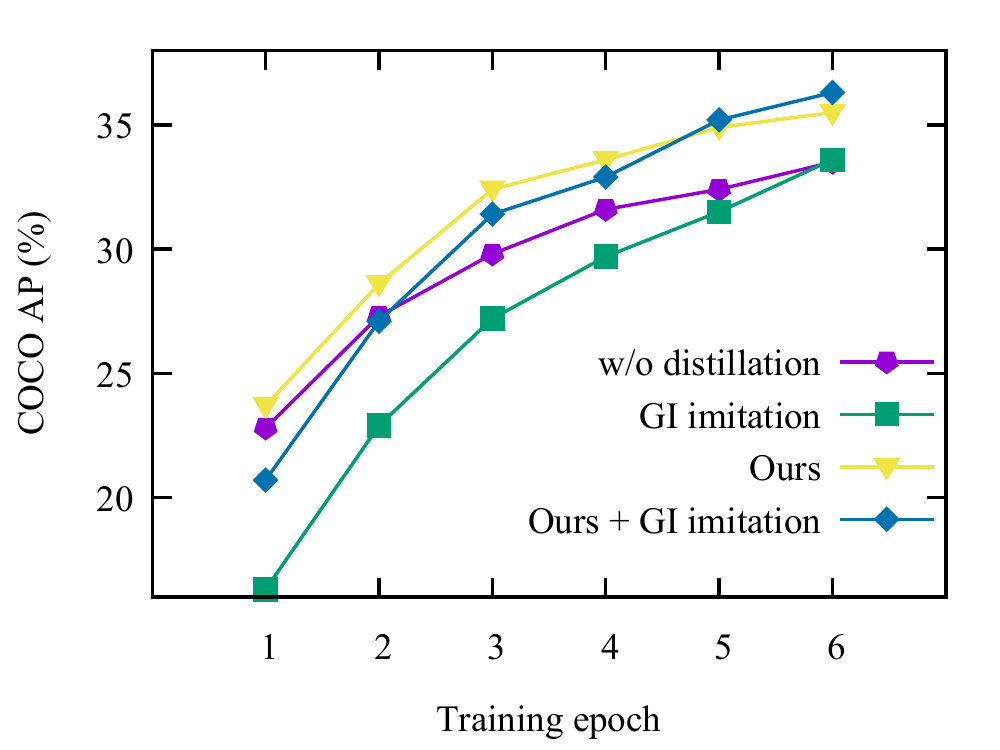}\\
  \vspace{-8pt}
  \caption{The average precision (AP) during the early training stage. 
    The feature imitation significantly slows down the convergence and gets 
    a sub-optimal generalization. 
    Logit mimicking (Ours) can reduce the training difficulty in the 
    early training stage.
  }\label{fig:training_difficulty}
\end{figure}

\myPara{Summary.}
Based on the above results and observations, we can draw the following conclusions:
\begin{itemize}
    \item Logit mimicking can outperform feature imitation in object detection when the localization knowledge distillation is explicitly distilled.
    \item Feature imitation can increase the consistency of the feature representations between the teacher-student pair, but come some drawbacks such as less feature robustness and slow training convergence. 
    Logit mimicking with the selective region distillation can significantly increase the consistency of the logits between the teacher-student pair, keep the learning liberty of features, and thereby speed up training process and benefit the KD performance more.
    This indicates that the consistency of feature representations between the teacher-student pair is not the crucial factor of improving the KD performance.
\end{itemize}

\subsection{Comparison with the State-of-the-Arts}

We compare our LD with the state-of-the-art dense object detectors by using our LD to further boost GFocalV2 \cite{gfocalv2}.
For COCO val2017, since most previous works use ResNet-50-FPN backbone with the single-scale $1\times$ training schedule (12 epochs) for validation, we also report the results under this setting for a fair comparison.
For COCO test-dev 2019, following a previous work \cite{gfocalv2}, the LD models with the $1333\times[480:960]$ multi-scale $2\times$ training schedule (24 epochs) are included.
The training is carried on a machine node with 8 GPUs with a batch size of 2 per GPU and initial learning rate 0.01 for a fair comparison.
During inference, single-scale testing ([$1333\times 800$] resolution) is adopted.
For different students ResNet-50, ResNet-101 and 
ResNeXt-101-32x4d-DCN \cite{xie2017aggregated,DCNv2}, 
we also choose different networks ResNet-101, 
ResNet-101-DCN 
and Res2Net-101-DCN \cite{gao2019res2net} as their teachers, 
respectively.

\begin{table}[t!]\small
  \caption{Comparison with state-of-the-art methods on COCO 
    \textit{val2017} and \textit{test-dev2019} . 
    \textbf{TS}: Traning Schedule. 
    '1$\times$': single-scale training 12 epochs. 
    '2$\times$': multi-scale training 24 epochs.
  }\label{tab:SOTA}
  \renewcommand\arraystretch{1.15}
  \setlength{\tabcolsep}{3.8pt}
  \centering
  \begin{tabular}{l|c|ccc|ccc} \hline \hline
    \textbf{Method} & \textbf{TS} & AP & AP$_{50}$ & AP$_{75}$ & AP$_{\it S}$ & AP$_{\it M}$ & AP$_{\it L}$ \\  \hline \hline
      \multicolumn{8}{c}{\textbf{ResNet-50 backbone on val2017}} \\\hline
      RetinaNet \cite{lin2017focal}  & 1$\times$ & 36.9  & 54.3 & 39.8 &  21.2 &  40.8 &  48.4\\
      FCOS\cite{FCOS}		& 1$\times$	& 38.6 & 57.2 & 41.5 &  22.4 &  42.2 & 49.8 \\
      SAPD \cite{zhu2019soft} & 1$\times$ & 38.8 & 58.7  & 41.3  & 22.5  & 42.6 & 50.8 \\
      ATSS \cite{ATSS} & 1$\times$  & 39.2  & 57.3 & 42.4 &  22.7 & 43.1  & 51.5 \\ 
      BorderDet \cite{qiu2020borderdet} & 1$\times$  & 41.4 & 59.4 & 44.5 & 23.6 & 45.1 & 54.6 \\
      AutoAssign \cite{zhu2020autoassign} & 1$\times$ & 40.5 & 59.8 & 43.9 & 23.1 & 44.7 & 52.9 \\
      PAA \cite{kim2020probabilistic} & 1$\times$ & 40.4 & 58.4 & 43.9 & 22.9 & 44.3 & 54.0 \\
      OTA \cite{OTA} & 1$\times$ & 40.7 & 58.4 & 44.3 & 23.2 & 45.0 & 53.6 \\
      GFocal \cite{gfocal} & 1$\times$ & 40.1 & 58.2 & 43.1 & 23.3 & 44.4 & 52.5 \\
      GFocalV2 \cite{gfocalv2} & 1$\times$ & 41.1 & 58.8 & 44.9 & 23.5 & 44.9 & 53.3 \\  
      LD (ours) & 1$\times$ & \textbf{42.7} & \textbf{60.2} & \textbf{46.7} & \textbf{25.0} & \textbf{46.4} & \textbf{55.1} \\ 
      \hline
      \multicolumn{8}{c}{\textbf{ResNet-101 backbone on test-dev 2019}} \\\hline
      RetinaNet \cite{lin2017focal}  & 2$\times$ & 39.1 & 59.1 & 42.3 & 21.8 & 42.7 & 50.2\\
      FCOS\cite{FCOS}		& 2$\times$	& 41.5 & 60.7 & 45.0 & 24.4 & 44.8 & 51.6 \\
      SAPD \cite{zhu2019soft} & 2$\times$ & 43.5 & 63.6 & 46.5 & 24.9 & 46.8 & 54.6 \\
      ATSS \cite{ATSS} & 2$\times$  & 43.6 & 62.1 & 47.4 & 26.1 & 47.0 & 53.6 \\ 
      BorderDet \cite{qiu2020borderdet} & 2$\times$  & 45.4 & 64.1 & 48.8 & 26.7 & 48.3 & 56.5 \\
      AutoAssign \cite{zhu2020autoassign} & 2$\times$ & 44.5 & 64.3 & 48.4 & 25.9 & 47.4 & 55.0 \\
      PAA \cite{kim2020probabilistic} & 2$\times$ & 44.8 & 63.3 & 48.7 & 26.5 & 48.8 & 56.3 \\
      OTA \cite{OTA} & 2$\times$ & 45.3 & 63.5 & 49.3 & 26.9 & 48.8 & 56.1\\
      GFocal \cite{gfocal} & 2$\times$ & 45.0 & 63.7 & 48.9 & 27.2 & 48.8 & 54.5\\
      GFocalV2 \cite{gfocalv2} & 2$\times$ & 46.0 & 64.1 & 50.2 & 27.6 & 49.6 & 56.5 \\  
      LD (ours) & 2$\times$ & \hlt{47.1} & \hlt{65.0} & \hlt{51.4} & \hlt{28.3} & \hlt{50.9} & \hlt{58.5} \\  
      \hline
      \multicolumn{8}{c}{\textbf{ResNeXt-101-32x4d-DCN backbone on test-dev 2019}} \\\hline
      SAPD \cite{zhu2019soft} & 2$\times$ & 46.6 & 66.6 & 50.0 & 27.3 & 49.7 & 60.7\\
      GFocal \cite{gfocal} & 2$\times$ & 48.2 & 67.4 & 52.6 & 29.2 & 51.7 & 60.2\\
      GFocalV2 \cite{gfocalv2} & 2$\times$ & 49.0 & 67.6 & 53.4 & 29.8 & 52.3 & 61.8  \\  
      LD (ours) & 2$\times$ & \hlt{50.5} & \hlt{69.0} & \hlt{55.3} & \hlt{30.9} & \hlt{54.4} & \hlt{63.4} \\  
      \hline
      
      \hline
  \end{tabular}
  \end{table}
  
  \tabref{tab:SOTA} reports the quantitative results.
  It can be seen that our LD improves the AP score of the SOTA GFocalV2
  by +1.6 and the AP$_{75}$ score by +1.8 when using the ResNet-50-FPN backbone.
  When using the ResNet-101-FPN and ResNeXt-101-32x4d-DCN with multi-scale $2\times$ training, we achieve the highest AP scores, 47.1 and 50.5 , which outperform all existing dense object detectors under the same backbone, neck and test settings.
  More importantly, our LD does not introduce any additional network parameters or computational overhead and hence can guarantee exactly the same inference speed as GFocalV2.

\section{Conclusion}\label{sec:conclusion}
\minor{In this paper, we propose a flexible localization distillation for dense object detection and a selective region distillation based on a new valuable localization region.
We show that 1) logit mimicking can be better than feature imitation;
and 2) the selective region distillation for transferring the classification and localization knowledge is important when distilling object detectors.
%
We hope our method could provide new research intuitions for the object detection community to develop better distillation strategies.
%
In the future, the applications of LD to sparse object detectors (DETR \cite{DETR} series), the heterogeneous detector pairs, and other relevant fields, e.g., instance segmentation, object tracking and 3D object detection, warrant further research.
Besides, since our LD shares the equivalent optimization effect to classification KD, some improved KD methods may also bring gain to LD, e.g., Relational KD \cite{relationKD}, Self-KD \cite{furlanello2018born,zhang2019your}, Teacher Assistant KD \cite{TA}, and Decoupled KD \cite{zhao2022decoupled}, etc.
Cross architecture distillation using recent state-of-the-art classification models~
\cite{wu2022p2t,hou2022conv2former,dai2021coatnet,liu2022convnet,Guo2021187} 
as teachers is also an interesting direction to explore.}



\ifCLASSOPTIONcaptionsoff
  \newpage
\fi

\renewcommand\thesubsectiondis{A.\arabic{subsection}}

\renewcommand{\thetable}{A\arabic{table}}
\renewcommand{\thefigure}{A\arabic{figure}}

\setcounter{table}{0}
\setcounter{proposition}{0}

\begin{appendices}

\section*{appendix}

\subsection{Property of LD}

Some Notations.

$\bm{g}^i=[g_1,g_2,\cdots,g_n]$, where $g_i=1$, and $0$ otherwise.

$\bm{e}=[e_1,e_2,\cdots,e_n]\in\mathbb{R}^n$ is the uniformly discretized variable for the regression range $[e_{\min}, e_{\max}]$.

The gradient of the cross-entropy (CE) loss $w.r.t.$ one of the logit $z_i\in\bm{z}_S$, $i\in\{1,2,\cdots, n\}$ can be represented as:
\begin{equation}\label{eq:ce}
	\frac{\partial\mathcal{L}_\text{CE}}{\partial z_i} = p_i-g_i,
\end{equation}
where $p_i$ is the predicted class probability at location $i$ and $z_S$ is the logit vector produced by the student network.
The gradient of the KD loss along with the CE loss $w.r.t.$ one of the logit $z_i\in\bm{z}_S$ can be represented as:
\begin{equation}\label{eq:cekd}
	\frac{\partial\mathcal{L}^{KD}}{\partial z_i} = \gamma(p_i-g_i) + \frac{\lambda}{\tau}({p_\tau}_i-{q_\tau}_i),
\end{equation}
where $\gamma$ and $\lambda$ are the CE and KD loss weights and $\tau$ is the temperature.
We follow the notations in \cite{tang2020understanding}, and denote $\frac{\partial\mathcal{L}_\text{CE}}{\partial z_i}$ by $\partial_i$ and $\frac{\partial\mathcal{L}^{KD}}{\partial z_i}$ by $\frac{\partial^{KD}_i}{\partial_i}$.
The ratio of Eq. \ref{eq:cekd} and Eq. \ref{eq:ce} indicates that KD performs gradient rescaling to the CE loss in the logits space.

\begin{definition}\label{def}
Let $\bm{p}\in\mathbb{R}^n$ be a predicted probability vector of a network, $M_i>0, i\in\{1,2,\cdots,n\}$ are predefined thresholds. $\bm{p}$ is called $M_i$-well-performed for a task $\mathbf{T}$ if the distance from $\bm{p}$ to its corresponding ground-truth vector $\bm{g}^i$ is bounded by $M_i$.
\end{definition}

\begin{lemma}\label{lemma1}
If two predicted probability vectors $\bm{p},\bm{q}$ are respectively $M_i$-well-performed and $M_j$-well-performed for the integer position classification with ground-truth vectors $\bm{g}^i,\bm{g}^j$, then their linear combination $u_1\bm{p}+u_2\bm{q}$ is $M$-well-performed for the float point number position localization with ground-truth value $y=u_1 e_i+u_2 e_j$, where $M=\max\{M_i,M_j\}$ and $u_1+u_2=1$.
\end{lemma}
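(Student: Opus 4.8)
The plan is to reduce Lemma~\ref{lemma1} to a single application of the triangle inequality, once the ground-truth vector of the float localization target is correctly identified. First I would restate the two hypotheses through Definition~\ref{def}: being $M_i$- and $M_j$-well-performed for the integer position classification means
\begin{equation}
\|\bm{p}-\bm{g}^i\|\leqslant M_i,\qquad \|\bm{q}-\bm{g}^j\|\leqslant M_j,
\end{equation}
where $\bm{g}^i$ and $\bm{g}^j$ are the one-hot vectors peaked at the discretized positions $e_i$ and $e_j$.

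The crux of the argument, and the step I expect to require the most care, is pinning down the ground-truth vector associated with the float localization value $y=u_1 e_i+u_2 e_j$. Following the bounding-box distribution construction of \cite{gfocal}, the soft label of a continuous target lying between two grid points is the linear interpolation of the corresponding one-hot vectors; hence the ground-truth vector of $y$ is $\bm{g}^y=u_1\bm{g}^i+u_2\bm{g}^j$, and one checks that its expectation under the uniform grid $\bm{e}$ is exactly $u_1 e_i+u_2 e_j=y$. This is where the float-number localization problem is tied back to the integer classification problem, so it is the conceptual heart rather than the computation.

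With this identification the error of the combined vector splits linearly,
\begin{equation}
(u_1\bm{p}+u_2\bm{q})-\bm{g}^y=u_1(\bm{p}-\bm{g}^i)+u_2(\bm{q}-\bm{g}^j),
\end{equation}
and I would finish by invoking the triangle inequality together with the convexity constraints $u_1,u_2\geqslant 0$, $u_1+u_2=1$:
\begin{equation}
\|(u_1\bm{p}+u_2\bm{q})-\bm{g}^y\|\leqslant u_1\|\bm{p}-\bm{g}^i\|+u_2\|\bm{q}-\bm{g}^j\|\leqslant u_1 M_i+u_2 M_j\leqslant\max\{M_i,M_j\}.
\end{equation}
Setting $M=\max\{M_i,M_j\}$ shows that $u_1\bm{p}+u_2\bm{q}$ is $M$-well-performed for the localization task, as claimed. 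The only remaining subtlety is that the last inequality needs $u_1,u_2\geqslant 0$; this is automatic because $y$ is a genuine interpolation point between $e_i$ and $e_j$, which forces $u_1,u_2\in[0,1]$. Since the bound rests only on the triangle inequality, the result is insensitive to the particular distance used in Definition~\ref{def}.
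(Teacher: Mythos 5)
Your proposal is correct and follows essentially the same route as the paper's proof: identify the localization ground truth as the interpolation $u_1\bm{g}^i+u_2\bm{g}^j$, split the error of $u_1\bm{p}+u_2\bm{q}$ via the triangle inequality, and bound $u_1M_i+u_2M_j$ by $\max\{M_i,M_j\}$. If anything, your norm-based phrasing (and your explicit remark that $u_1,u_2\geqslant 0$ is needed for the final bound) is slightly more careful than the paper, which claims the argument works for an ``arbitrary distance metric'' while actually relying on translation invariance and homogeneity, i.e.\ a norm-induced distance.
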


\begin{proof}
By Def. \ref{def}, the two distances satisfy $d(\bm{p},\bm{g}^i)\leqslant M_i$, $d(\bm{q},\bm{g}^j)\leqslant M_j$, where $\bm{g}^i\neq\bm{g}^j$. Note that $d(\cdot, \cdot)$ here can be an arbitrary distance metric, e.g., the $l_2$ distance.

A float point number position localization requires a probability, which can be linearly interpolated by $\bm{l}=u_1\bm{p}+u_2\bm{q}$, and its ground-truth vector is $\bm{g}=u_1\bm{g}^i+u_2\bm{g}^j$.

Then we get 
\begin{align}
    d(\bm{l},\bm{g})&=d(u_1\bm{p}+u_2\bm{q},u_1\bm{g}^i+u_2\bm{g}^j)\\
    &\leqslant d(u_1\bm{p}+u_2\bm{q},u_1\bm{g}^i+u_2\bm{q}) \nonumber \\
    &\quad + d(u_1\bm{g}^i+u_2\bm{q},u_1\bm{g}^i+u_2\bm{g}^j)\\
    &=d(u_1\bm{p},u_1\bm{g}^i)+d(u_2\bm{q},u_2\bm{g}^j)\\
    &\leqslant u_1M_i+u_2M_j\\
    &\leqslant \max\{M_i,M_j\}\\
    &=M.
\end{align}

Hence the network is $M$-well-performed for the float point number position localization.
\end{proof}

\begin{lemma}\label{lemma2}
If $\bm{l}$ is a localization probability vector with ground-truth value $y=u_1e_i+u_2e_j$, where $u_1+u_2=1$, then $\bm{l}$ can be decomposed into two classification probabilities $\bm{p}$ and $\bm{q}$ with ground-truth vectors $\bm{g}^i$ and $\bm{g}^j$.
\end{lemma}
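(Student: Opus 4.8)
The plan is to exhibit an explicit decomposition rather than argue abstractly for existence, mirroring the construction of Lemma~\ref{lemma1} but run in reverse. Recall that in the distribution-based localization formulation the float ground-truth value $y=u_1e_i+u_2e_j$ is encoded as the soft label vector $\bm{g}=u_1\bm{g}^i+u_2\bm{g}^j$, a convex combination (since $u_1+u_2=1$ and, under the DFL interpolation, $u_1,u_2\geqslant 0$) of the two one-hot vectors $\bm{g}^i,\bm{g}^j$ sitting at the flanking discretization endpoints $e_i,e_j$. The target of Lemma~\ref{lemma2} is then to produce two classification predictions, one to be scored against $\bm{g}^i$ and one against $\bm{g}^j$, whose $u_1,u_2$-weighted average reconstructs $\bm{l}$.

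First I would set $\bm{p}:=\bm{l}$ and $\bm{q}:=\bm{l}$, reinterpreting the single localization output $\bm{l}$ as two formally distinct classification predictions that merely carry different ground-truth assignments $\bm{g}^i$ and $\bm{g}^j$. Since $\bm{l}$ is by hypothesis a probability vector (nonnegative entries summing to one), both $\bm{p}$ and $\bm{q}$ are automatically valid classification probability vectors, so no renormalization is needed. The reconstruction identity is then immediate: $u_1\bm{p}+u_2\bm{q}=(u_1+u_2)\bm{l}=\bm{l}$, using only $u_1+u_2=1$.

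To make the decomposition meaningful at the loss level rather than merely at the level of vectors, I would next verify that the localization cross-entropy splits accordingly. Because the soft target factorizes as $\bm{g}=u_1\bm{g}^i+u_2\bm{g}^j$ and $\mathcal{H}(\cdot,\cdot)$ is linear in its second argument, one obtains $\mathcal{H}(\bm{l},\bm{g})=u_1\mathcal{H}(\bm{p},\bm{g}^i)+u_2\mathcal{H}(\bm{q},\bm{g}^j)$, i.e. the single float-position localization loss is exactly the $u_1,u_2$-weighted sum of the two integer-position classification losses on the decomposed pieces. This is precisely the bridge needed by Proposition~\ref{pro1}, where the same convex weights reappear in front of the per-logit KD gradients, and it is also the natural counterpart to the forward direction established in Lemma~\ref{lemma1}.

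I expect the only genuine subtlety, as opposed to routine algebra, to be interpretive: one must read ``decomposition'' as a reinterpretation of one prediction under two distinct supervision targets, not as splitting $\bm{l}$ into two numerically different vectors. Once the choice $\bm{p}=\bm{q}=\bm{l}$ is accepted, the reconstruction identity and the loss factorization are both forced by $u_1+u_2=1$ together with the linearity of cross-entropy in the label, so no hard computation remains.
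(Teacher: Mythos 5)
Your proof is correct for the lemma as stated, and it takes a genuinely different route from the paper's. The paper treats the decomposition as a linear system $A X=\bm{b}$ in the $2n$ unknowns $(p_1,\dots,p_n,q_1,\dots,q_n)$ (the constraints being $\sum_k p_k=1$, $\sum_k q_k=1$, and $u_1p_k+u_2q_k=l_k$ for all $k$), and shows via elementary row operations that the rank of the coefficient matrix equals the rank of the augmented matrix, namely $n+1<2n$ for $n>1$, so infinitely many solutions exist. You instead exhibit one explicit solution, $\bm{p}=\bm{q}=\bm{l}$, which indeed satisfies every constraint of that same system, so your construction is a special case of what the paper proves to exist. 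Your route buys three things: it is far more elementary; it guarantees non-negativity for free (the paper's rank argument only produces solutions of the linear system, which need not have non-negative entries, hence need not be probability vectors --- a point the paper's proof silently skips); and it makes the decomposition commute with the generalized SoftMax, since $\bm{p}_\tau=\bm{q}_\tau=\bm{l}_\tau$, which is exactly what the gradient identity in Proposition~\ref{pro1} needs (for a non-trivial decomposition at temperature $1$, the identity ${l_\tau}_i=u_1{p_\tau}_i+u_2{q_\tau}_i$ does not automatically hold). What the paper's argument buys in exchange is that the solution set is an affine space of dimension $n-1$, so genuinely distinct decompositions with $\bm{p}\neq\bm{q}$ exist; your degenerate choice, while formally sufficient, renders part~2 of Proposition~\ref{pro1} nearly tautological, because both ``KD effects'' are then taken against the same teacher vector, which weakens the intended interpretation that LD on a float-point position is equivalent to KD on two \emph{different} integer-position classification problems. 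Your observation about the loss splitting, $\mathcal{H}(\bm{l},\bm{g})=u_1\mathcal{H}(\bm{p},\bm{g}^i)+u_2\mathcal{H}(\bm{q},\bm{g}^j)$, is consistent with the paper's convention (prediction first, label second, linear in the label) and matches the DFL definition used in the appendix, though it is not something the lemma itself requires.
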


\begin{proof}
Let $\bm{l}\in\mathbb{R}^n$ be a predicted localization probability and $\bm{g}$ be its ground-truth vector.
It is easy to decompose $\bm{g}$ into two integer position ground-truth vectors $\bm{g}^i$ and $\bm{g}^j$, satisfying $\bm{g}=u_1\bm{g}^i+u_2\bm{g}^j$.

Existence of the decomposition of $\bm{l}$:

To decompose $\bm{l}$ into two classification probabilities $\bm{p}$ and $\bm{q}$, satisfying $\bm{l}=u_1\bm{p}+u_2\bm{q}$, we solve the following linear equations,
\begin{equation}
AX=\bm{b}\Longleftrightarrow\left\{\begin{array}{ll}
    \sum\limits_ip_i=1,\\
    \sum\limits_iq_i=1,\\ 
    u_1p_1+u_2q_1=l_1,\\
    u_1p_2+u_2q_2=l_2,\\
    \vdots\\
    u_1p_n+u_2q_n=l_n,\\    
\end{array}\right.
\end{equation}
where $X=(p_1,p_2,\cdots,p_n,q_1,q_2,\cdots,q_n)^T$, 
and the augmented matrix $(A,\bm{b})$ is given by:
\begin{equation}
\left(\begin{array}{ccccccccccc}
     1 & 1 & 1 & \cdots & 1 & 0 & 0 & 0 & \cdots & 0 & 1\\
     0 & 0 & 0 & \cdots & 0 & 1 & 1 & 1 & \cdots & 1 & 1\\
     u_1 & 0 & 0 & \cdots & 0 & u_2 & 0 & 0 & \cdots & 0 & l_1\\
     0 & u_1 & 0 & \cdots & 0 & 0 & u_2 & 0 & \cdots & 0 & l_2\\
     0 & 0 & u_1 & \cdots & 0 & 0 & 0 & u_2 & \cdots & 0 & l_3\\
     \vdots & \vdots & \vdots & \ddots & \vdots & \vdots & \vdots & \vdots & \ddots & \vdots & \vdots \\
     0 & 0 & 0 & \cdots & u_1 & 0 & 0 & 0 & \cdots & u_2 & l_n\\
\end{array}\right).
\end{equation}

By applying Elementary Row Operations, the matrix $(A,\bm{b})$ is equivalent to
\begin{equation}
\left(\begin{array}{ccccccccccc}
     1 & 0 & 0 & \cdots & 0 & 0 & -\frac{u_2}{u_1} & -\frac{u_2}{u_1} & \cdots & -\frac{u_2}{u_1} & \frac{l_1-u_2}{u_1} \\
     0 & 1 & 0 & \cdots & 0 & 0 & \frac{u_2}{u_1} & 0 & \cdots & 0 & \frac{l_2}{u_1} \\
     0 & 0 & 1 & \cdots & 0 & 0 & 0 & \frac{u_2}{u_1} & \cdots & 0 & \frac{l_3}{u_1} \\
     \vdots & \vdots & \vdots & \ddots & \vdots & \vdots & \vdots & \vdots & \ddots & \vdots & \vdots \\
     0 & 0 & 0 & \cdots & 1 & 0 & 0 & 0 & \cdots & \frac{u_2}{u_1} & \frac{l_n}{u_1} \\
     0 & 0 & 0 & \cdots & 0 & 1 & 1 & 1 & \cdots & 1 & 1 \\
     0 & 0 & 0 & \cdots & 0 & 0 & 0 & 0 & \cdots & 0 & 0 \\
\end{array}\right),
\end{equation}
and we obtain the rank of the coefficient matrix $A$ is equal to the rank of the augmented matrix $(A,\bm{b})$, which is $n+1$.
Note that $n+1<2n$ when $n>1$.
Thus the above linear equations have infinite solutions.
\end{proof}

The following proposition describes the relation between KD and LD that conducting LD to optimize a localization probability is equivalent to conducting KD to optimize two classification probabilities.
\begin{theorem}
\label{appendix:1}
Let $\bm{s}$ be the student's predicted probability vector, $u_1$ and $u_2$ are two constants and their summation is $1$. We have
\begin{enumerate}
    \item If $\bm{p}$ and $\bm{q}$ are two classification probabilities, LD effect on the linear combination $\bm{l}=u_1\bm{p}+u_2\bm{q}$ is equal to the linear combination of KD effects on $\bm{p},\bm{q}$.
    
    \item If $\bm{l}$ is a localization probability, LD effect on $\bm{l}$ is equal to two KD effects on its decomposition $\bm{p}$ and $\bm{q}$.
\end{enumerate}
\end{theorem}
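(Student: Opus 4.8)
The plan is to reduce both claims to a single algebraic identity between logit-space gradients, exploiting the fact that the gradient of the softmax cross-entropy distillation loss is \emph{linear} in its target distribution. First I would rewrite \eqref{eq:cekd} in the notation of the proposition: for any teacher target $\bm{t}$ with associated one-hot CE target $\bm{g}$, the gradient of the combined loss with respect to a student logit $z_i$ is
\begin{equation}\label{eq:genericgrad}
\partial KD^{\bm{t}}_i = \gamma(s_i - g_i) + \frac{\lambda}{\tau}(s_{\tau,i} - t_{\tau,i}),
\end{equation}
where $\bm{s}$ and $\bm{s}_\tau$ denote the student prediction at temperatures $1$ and $\tau$. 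The key structural point is that the supervision enters only through $g_i$ and $t_{\tau,i}$, and it enters linearly, whereas the student-dependent pieces $s_i$ and $s_{\tau,i}$ are shared by every target and carry no supervision. This is precisely the feature that lets a convex combination of effects telescope.

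For part (1), I would take classification probabilities $\bm{p},\bm{q}$ with one-hot targets $\bm{g}^i,\bm{g}^j$ and instantiate \eqref{eq:genericgrad} to obtain $\partial KD^{\bm{p}}_i$ and $\partial KD^{\bm{q}}_i$. Forming $u_1\partial KD^{\bm{p}}_i + u_2\partial KD^{\bm{q}}_i$ and collecting terms, the shared student pieces factor out as $(u_1+u_2)s_i = s_i$ and $(u_1+u_2)s_{\tau,i} = s_{\tau,i}$ because $u_1+u_2=1$, while the supervision terms aggregate into the combinations $u_1\bm{g}^i+u_2\bm{g}^j$ and $u_1\bm{p}+u_2\bm{q}$. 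Lemma~\ref{lemma1} then certifies that $\bm{l}=u_1\bm{p}+u_2\bm{q}$ is an admissible (well-performed) localization target with ground-truth value $y=u_1 e_i+u_2 e_j$ and ground-truth vector $\bm{g}=u_1\bm{g}^i+u_2\bm{g}^j$, so the aggregated expression is exactly $\partial LD^{\bm{l}}_i$ read off from \eqref{eq:genericgrad}, which gives \eqref{eq:KDLD}.

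For part (2) I would run the same computation backwards. Starting from a localization probability $\bm{l}$ with ground-truth value $y=u_1 e_i+u_2 e_j$, Lemma~\ref{lemma2} supplies a decomposition $\bm{l}=u_1\bm{p}+u_2\bm{q}$ into classification probabilities, paired with $\bm{g}=u_1\bm{g}^i+u_2\bm{g}^j$. Substituting this into $\partial LD^{\bm{l}}_i$ and splitting the now-linear supervision terms back apart (again using $u_1+u_2=1$ to duplicate the shared student pieces) reproduces $u_1\partial KD^{\bm{p}}_i + u_2\partial KD^{\bm{q}}_i$, which is the claim.

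I expect the genuine content to live in the two lemmas rather than in this proposition: part (1) relies on the triangle-inequality bound of Lemma~\ref{lemma1} guaranteeing that the combined vector is actually a localization target, and part (2) relies on the rank computation of Lemma~\ref{lemma2} guaranteeing the decomposition system is solvable (indeed under-determined for $n>1$). Granting those, the proposition is a one-line consequence of $u_1+u_2=1$ and the linearity of the cross-entropy gradient in its target. The single point that deserves care is to read $\bm{p},\bm{q},\bm{l}$ as the probability vectors appearing directly in the distillation gradient \eqref{eq:genericgrad}, \ie the temperature-$\tau$ teacher distributions, since the nonlinearity of the softmax means the decomposition must be asserted at that level for the linearity to be exact.
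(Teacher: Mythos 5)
Your proposal is correct and takes essentially the same route as the paper's proof: the paper likewise reduces both claims to the linearity of the distillation gradient in its target, writing $\partial LD^{\bm{l}}_i = s_{\tau,i}-l_{\tau,i}$, splitting $s_{\tau,i}=u_1 s_{\tau,i}+u_2 s_{\tau,i}$ via $u_1+u_2=1$, and invoking Lemma~1 for well-definedness of the combination (part 1) and Lemma~2 for existence of the decomposition (part 2). The only differences are cosmetic: the paper differentiates just the distillation term while you carry the CE/ground-truth term along (which telescopes identically by the same linearity), and your closing caveat about asserting the decomposition at the temperature-$\tau$ level is a point the paper silently assumes by writing $l_{\tau,i}=u_1 p_{\tau,i}+u_2 q_{\tau,i}$ directly.
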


\begin{proof}
We first denote the derivatives of the KD loss of two probabilities $\bm{s},\bm{p}$ $w.r.t.$ a given logit $z_i$ by $\partial KD^p_i$, and $\partial LD^p_i$ likewise for the LD loss.

1) According to Lemma \ref{lemma1}, the linear combination $\bm{l}=u_1\bm{p}+u_2\bm{q}$ is well defined and the derivatives of the LD loss of $\bm{s}, \bm{l}$ $w.r.t.$ a given logit $z_i$ is given by:
\begin{align}
    \partial LD^l_i&={s_\tau}_i-{l_\tau}_i\\
    &=u_1{s_\tau}_i+u_2{s_\tau}_i-(u_1{p_\tau}_i+u_2{q_\tau}_i)\\
    &=u_1({s_\tau}_i-{p_\tau}_i)+u_2({s_\tau}_i-{q_\tau}_i)\\
    &=u_1\partial KD^p_i+u_2\partial KD^q_i \label{eq:KDLDapp}
\end{align}

2) According to Lemma \ref{lemma2}, the decomposition of $\bm{l}$ exists, which is written as $\bm{l}=u_1\bm{p}+u_2\bm{q}$. Then Eq. \ref{eq:KDLDapp} still holds.
\end{proof}

\subsection{Gradient Rescaling}\label{appendix:2}
We first give the lemma in \cite{tang2020understanding}.
\begin{lemma}\label{lemma}
Let ${q_\tau}_t={p_\tau}_t+c_t+\eta$, where $c_t$ is teacher’s relative prediction confidence on the ground-truth class $t$ and $\eta$ is a zero-mean random noise. Then the logit's gradient rescaling factor by applying KD is given by:
\begin{equation}
    \mathbb{E}_\eta\left[\frac{\partial^{KD}_t}{\partial_t}\right]
    =\mathbb{E}_\eta\left[\frac{\sum\limits_{i\neq t}\partial^{KD}_i}{\sum\limits_{i\neq t}\partial_i}\right]
    =\gamma+\frac{\lambda}{\tau}\left(\frac{c_t}{1-p_t}\right).
\end{equation}
\end{lemma}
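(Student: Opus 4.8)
The plan is to establish both equalities by directly substituting the gradient formulas of \eqref{eq:ce} and \eqref{eq:cekd} and exploiting that $\bm{p}_\tau$ and $\bm{q}_\tau$ are normalized probability vectors. Recall $\partial_i = p_i - g_i$ and $\partial^{KD}_i = \gamma(p_i-g_i) + \frac{\lambda}{\tau}({p_\tau}_i - {q_\tau}_i)$. I would first handle the leftmost quantity, the ratio at the ground-truth index $t$ where $g_t=1$. Feeding the hypothesis ${q_\tau}_t = {p_\tau}_t + c_t + \eta$ into $\partial^{KD}_t$ cancels the two ${p_\tau}_t$ contributions and leaves a numerator depending on $\eta$ only through the affine term $c_t+\eta$; dividing by $\partial_t = p_t-1$ then produces a rescaling factor of the form $\gamma + \frac{\lambda}{\tau}\cdot\frac{c_t+\eta}{1-p_t}$.

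For the middle quantity I would instead sum over the non-target indices $i \neq t$, for which $g_i = 0$. The denominator collapses immediately by normalization of $\bm{p}$, since $\sum_{i\neq t} p_i = 1 - p_t$. The crucial step is the softened part of the numerator, $\sum_{i \neq t}({p_\tau}_i - {q_\tau}_i)$: because both $\bm{p}_\tau$ and $\bm{q}_\tau$ sum to one, this non-target sum of differences telescopes into the single target difference ${q_\tau}_t - {p_\tau}_t = c_t + \eta$. Hence the aggregated non-target ratio simplifies to exactly the same expression $\gamma + \frac{\lambda}{\tau}\cdot\frac{c_t+\eta}{1-p_t}$ obtained for the target index.

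Finally, I would observe that the two pre-expectation ratios are literally identical, so the left and middle expressions coincide before any averaging; taking $\mathbb{E}_\eta$ and using $\mathbb{E}_\eta[\eta]=0$ removes the noise term and yields the stated value $\gamma + \frac{\lambda}{\tau}\left(\frac{c_t}{1-p_t}\right)$. I expect no real obstacle here: the only non-mechanical observation is that the normalization of the temperature-softened vectors is what converts the non-target sum of logit differences into ${q_\tau}_t - {p_\tau}_t$, and the expectation is trivial because $\eta$ enters affinely with zero mean, so no higher-moment or concentration control of the noise is needed.
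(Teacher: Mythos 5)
Your proof is correct. There is, however, nothing in the paper to compare it against: the paper does not prove this lemma at all, but imports it verbatim from the cited work \cite{tang2020understanding} (``We first give the lemma in \cite{tang2020understanding}''), and only proves the subsequent Corollary, which transplants the same gradient-rescaling statement to LD and the distribution focal loss. Your derivation is the natural self-contained argument given the paper's own gradient formulas: substituting $\partial_i = p_i - g_i$ and $\partial^{KD}_i = \gamma(p_i-g_i)+\frac{\lambda}{\tau}({p_\tau}_i-{q_\tau}_i)$, using $\sum_{i\neq t}p_i = 1-p_t$ for the denominator and the normalization of $\bm{p}_\tau,\bm{q}_\tau$ to collapse $\sum_{i\neq t}({p_\tau}_i-{q_\tau}_i)$ into ${q_\tau}_t-{p_\tau}_t = c_t+\eta$, so that both pre-expectation ratios equal $\gamma + \frac{\lambda}{\tau}\cdot\frac{c_t+\eta}{1-p_t}$ and the expectation is trivial because the denominators are $\eta$-free and $\eta$ enters affinely. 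This is also essentially the same computation the paper carries out when proving its Corollary (where it verifies $\sum_{s\neq i}\partial^{LD}_s = -\partial^{LD}_i$, i.e.\ that both gradient vectors sum to zero, which is the compact form of your ``telescoping'' observation). In short, you have supplied a sound proof of a step the paper delegates to its reference.
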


Next, we give the corollary of Lemma \ref{lemma}, which shows that LD performs gradient rescaling to distribution focal loss (DFL) \cite{gfocal} in the logits space.

\begin{corollary}
Let ${q_\tau}_i={p_\tau}_i+c_i+\eta_i$, where $c_i$ is teacher’s relative prediction confidence at position $i$, $\eta_i$ is a zero-mean random noise. Then the logit's gradient rescaling factor to DFL by applying LD is given by:
\begin{equation}\label{eq:rescaling1}
    \mathbb{E}_\eta\left[\frac{\partial^{LD}_i}{\partial_i}\right]
    =\mathbb{E}_\eta\left[\frac{\sum\limits_{s\neq i}\partial^{LD}_s}{\sum\limits_{s\neq i}\partial_s}\right]
    =\gamma +\frac{\lambda}{\tau}\left(\frac{c_i}{u_i-p_i}\right),
\end{equation}
where $\frac{\partial^{LD}_i}{\partial_i}$ denotes the gradient of LD loss along with DFL w.r.t. logits $z_i$, $u_i$ and $u_j$ are two constants and their summation is 1, $\gamma$ and $\lambda$ are the loss weights of the DFL and LD loss respectively, and $\tau$ is temperature.
\end{corollary}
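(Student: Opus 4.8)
The plan is to mirror the proof of the KD rescaling result (Lemma~\ref{lemma}, from \cite{tang2020understanding}), replacing its one-hot classification target by the soft DFL target; this single substitution is exactly what shifts the denominator from $1-p_t$ to $u_i-p_i$. First I would record the DFL gradient. By the bounding-box distribution construction (and the decomposition in Lemma~\ref{lemma2}), the localization ground-truth probability vector $\bm{g}$ places mass $u_i$ at position $i$ and $u_j$ at the neighboring position $j$ with $u_i+u_j=1$ and zeros elsewhere, and DFL is the cross-entropy against this soft label. Hence, by the same softmax-cross-entropy computation as in \eqref{eq:ce}, the DFL gradient at the target logit is $\partial_i = p_i - u_i$, where $p_i$ is the student's untempered probability at position $i$.

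Next I would assemble the combined LD-plus-DFL gradient in the style of \eqref{eq:cekd}, namely $\partial^{LD}_i = \gamma(p_i-u_i) + \frac{\lambda}{\tau}({p_\tau}_i-{q_\tau}_i)$, with $\gamma,\lambda$ the DFL and LD loss weights. Substituting the noise model ${q_\tau}_i={p_\tau}_i+c_i+\eta_i$ gives ${p_\tau}_i-{q_\tau}_i=-(c_i+\eta_i)$, so that $\partial^{LD}_i = \gamma(p_i-u_i)-\frac{\lambda}{\tau}(c_i+\eta_i)$. Dividing by $\partial_i=p_i-u_i$ and rearranging the sign yields $\frac{\partial^{LD}_i}{\partial_i}=\gamma+\frac{\lambda}{\tau}\,\frac{c_i+\eta_i}{u_i-p_i}$. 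Since $\eta_i$ is zero-mean and the denominator $u_i-p_i$ depends only on the student (hence is deterministic with respect to $\eta_i$), taking $\mathbb{E}_\eta$ passes through to leave $\gamma+\frac{\lambda}{\tau}\,\frac{c_i}{u_i-p_i}$, which is the claimed right-hand side.

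To establish the middle equality I would invoke the zero-sum property of both gradient fields. Because $\sum_s p_s=1$ and the soft target sums to $u_i+u_j=1$, we have $\sum_s \partial_s=0$; and because $\sum_s {p_\tau}_s=\sum_s {q_\tau}_s=1$ by softmax normalization, the LD contribution also sums to zero, so $\sum_s \partial^{LD}_s=0$. Consequently $\sum_{s\neq i}\partial_s=-\partial_i$ and $\sum_{s\neq i}\partial^{LD}_s=-\partial^{LD}_i$, and the two minus signs cancel in the quotient, giving $\frac{\sum_{s\neq i}\partial^{LD}_s}{\sum_{s\neq i}\partial_s}=\frac{\partial^{LD}_i}{\partial_i}$; applying the expectation from the previous step closes the chain of equalities.

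I expect the only real subtlety to be conceptual rather than computational: recognizing that DFL is a soft-target cross-entropy whose target mass at position $i$ is $u_i$ rather than $1$, so that the single modification to \cite{tang2020understanding} is the replacement $1-p_t \mapsto u_i-p_i$, while everything else is identical softmax-cross-entropy bookkeeping. The minor points to get right are the sign in forming the ratio and the observation that $u_i-p_i$ carries no $\eta$ dependence, which is precisely what lets the expectation commute with the division.
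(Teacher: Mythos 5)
Your proposal is correct and follows essentially the same route as the paper: the paper likewise writes DFL as $u_i\mathcal{H}(\bm{p},\bm{g}^i)+u_j\mathcal{H}(\bm{p},\bm{g}^j)$ (equivalent to your soft-target cross-entropy, since cross-entropy is linear in the target), obtains $\partial_i=p_i-u_i$ and $\partial^{LD}_i=\gamma(p_i-u_i)+\frac{\lambda}{\tau}({p_\tau}_i-{q_\tau}_i)$, and takes the expectation with the deterministic denominator exactly as you do. The only cosmetic difference is the middle equality: where you invoke the general zero-sum property of softmax cross-entropy gradients (targets and predictions both summing to $1$), the paper verifies $\sum_{s\neq i}\partial^{LD}_s=-\partial^{LD}_i$ by explicit term-by-term summation, so your argument is a slightly cleaner packaging of the identical fact.
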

\begin{proof}
Following \cite{gfocal}, DFL is defined as the linear combination of two CE loss at position $i$ and $j$,
\begin{equation}\label{eq:dfl}
    \mathcal{L}_{DFL}=u_i\mathcal{H}(\bm{p},\bm{g^i})+u_j\mathcal{H}(\bm{p},\bm{g^j}),
\end{equation}
where $\bm{g^i}=\{0,1\}^n$ are ground-truth labels whose value is 1 at position $i$ and 0 otherwise.
One can easily get the gradient of DFL $w.r.t.$ the logit $z_i$,
\begin{equation}\label{eq:dflgradient}
    \frac{\partial\mathcal{L}_\text{DFL}}{\partial z_i} = u_i(p_i-g_i)+u_jp_i=p_i-u_i,
\end{equation}
and we still use the notation $\partial_i$ to represent $\frac{\partial\mathcal{L}_\text{DFL}}{\partial z_i}$.
With LD, the total loss is given by:
\begin{equation}\label{eq:celd}
    \mathcal{L}^{LD}=\gamma(u_i\mathcal{H}(\bm{p},\bm{g^i})+u_j\mathcal{H}(\bm{p},\bm{g^j}))+\lambda\mathcal{H}(\bm{p}_\tau,\bm{q}_\tau),
\end{equation}
The gradient of LD loss along with DFL $w.r.t.$ the logit $z_i\in\bm{z}_S$ can be represented as:
\begin{equation}\label{eq:ldgradient}
	\frac{\partial\mathcal{L}^{LD}}{\partial z_i} = \gamma u_i(p_i-g_i) + \gamma u_jp_i + \frac{\lambda}{\tau}({p_\tau}_i-{q_\tau}_i),
\end{equation}
and we still denote $\partial^{LD}_i=\frac{\partial\mathcal{L}_\text{LD}}{\partial z_i}$.
According to Lemma \ref{lemma}, we have the ratio of Eq. \ref{eq:ldgradient} and Eq. \ref{eq:dflgradient},
\begin{align}
	\mathbb{E}_\eta\left[\frac{\partial^{LD}_i}{\partial_i}\right]
	&=\gamma u_i\frac{p_i-g_i}{p_i-u_i}+\gamma \frac{u_jp_i}{p_i-u_i}-\frac{\lambda}{\tau}\frac{c_i}{p_i-u_i}\\
	&=\gamma + \frac{\lambda}{\tau}\frac{c_i}{u_i-p_i}.
\end{align}
Thus, the sum of the incorrect position gradients is given by:
\begin{equation}
\begin{array}{ll}
    \sum\limits_{s\neq i}\partial^{LD}_s\\
	=\gamma u_i\sum\limits_{s\neq i}p_s + \gamma u_j\sum\limits_{s\neq i,j}p_s 
	+ \gamma u_j(p_j-g_j) + \frac{\lambda}{\tau}\sum\limits_{s\neq i}({p_\tau}_s-{q_\tau}_s)\\
	=\gamma u_i(g_i-p_i) + \gamma u_j(g_i-p_i)-\gamma u_jg_j 
	+ \frac{\lambda}{\tau}({q_\tau}_s-{p_\tau}_s)\\
	=\gamma u_i(g_i-p_i) - \gamma u_jp_i + \frac{\lambda}{\tau}({q_\tau}_s-{p_\tau}_s)\\
	=-\partial^{LD}_i.\\
\end{array}
\end{equation}
Similarly applies for $\partial_s$, and hence the proof.
\end{proof}

\end{appendices}

{\small
	\bibliographystyle{IEEEtran}
	\bibliography{egbib}
}


\newcommand{\addPhoto}[1]{\includegraphics[width=1in,height=1.15in,clip,keepaspectratio]{pic/photo/#1}}

\vspace{-40pt}

\begin{IEEEbiography}[\addPhoto{ZZH.jpg}]{Zhaohui Zheng}
	received the M.S. degree in computational mathematics from 
  Tianjin University in 2021.
	He is currently a Ph.D. candidate with the School of Computer Science 
  at Nankai University, Tianjin, China.
	His research interests include object detection, 
  instance segmentation and knowledge distillation.
\end{IEEEbiography}

\vspace{-35pt}
\begin{IEEEbiography}[\addPhoto{YRG.jpg}]{Rongguang Ye}
  received the B.S. and M.S. degrees from the School of Mathematics, 
  Tianjin University, Tianjin, China, in 2019 and 2022.
  He is now working at Intel Asia-Pacific Research And Development Ltd 
  as an AI framework engineer.
  His research interests include object detection and computer vision.
\end{IEEEbiography}

\vspace{-35pt}
\begin{IEEEbiography}[\addPhoto{houqb.jpg}]{Qibin Hou} 
  received his Ph.D. degree from the School of Computer Science, 
  Nankai University. 
  Then, he worked at the National University of Singapore as a research fellow. 
  Now, he is an associate professor at School of Computer Science, 
  Nankai University. 
  He has published more than 30 papers on top conferences/journals, 
  including T-PAMI, CVPR, ICCV, NeurIPS, etc. 
  His research interests include deep learning and computer vision.
\end{IEEEbiography}

\vspace{-35pt}
\begin{IEEEbiography}[\addPhoto{DRen.jpg}]{Dongwei Ren}
	received two Ph.D. degrees in computer application technology from 
  Harbin Institute of Technology and The Hong Kong Polytechnic University 
  in 2017 and 2018, respectively.
	From 2018 to 2021, he was an Assistant Professor with the 
  College of Intelligence and Computing, Tianjin University.
	He is currently an Associate Professor with the School of Computer 
  Science and Technology, Harbin Institute of Technology. 
	His research interests include computer vision and deep learning.
\end{IEEEbiography}

\vspace{-35pt}
\begin{IEEEbiography}[\addPhoto{WP.jpg}]{Ping Wang}
  received the B.S., M.S., and Ph.D. degrees in computer science from 
  Tianjin University, Tianjin, China, in 1988, 1991, and 1998, respectively.
  She is currently a Professor with the School of Mathematics, 
  Tianjin University.
  Her research interests include image processing and machine learning.
\end{IEEEbiography}

\vspace{-30pt}
\begin{IEEEbiography}[\addPhoto{WZuo.jpg}]{Wangmeng Zuo} 
	received the Ph.D. degree from the Harbin Institute of Technology in 2007.
	He is currently a Professor in the School of Computer Science and Technology,
  Harbin Institute of Technology. 
  His research interests include image enhancement and restoration, 
  image and face editing, object detection, visual tracking, 
  and image classification. 
  He has published over 100 papers in top tier journals and conferences. 
  His publications have been cited more than 30,000 times in literature. 
  He is on the editorial boards of IEEE TPAMI and IEEE TIP.
\end{IEEEbiography}

\vspace{-30pt}
\begin{IEEEbiography}[\addPhoto{cmm.jpg}]{Ming-Ming Cheng} 
  received his PhD degree from Tsinghua University in 2012.
  Then he did 2 years research fellow, with Prof. Philip Torr in Oxford.
  He is now a professor at Nankai University, leading the Media Computing Lab.
  His research interests include computer graphics, computer vision, 
  and image processing. 
  He received research awards including 
  National Science Fund for Distinguished Young Scholars
  and ACM China Rising Star Award.
  He is on the editorial boards of IEEE TPAMI and IEEE TIP.
\end{IEEEbiography}

\vfill

\end{document}